\documentclass[twoside]{article}

\usepackage[accepted]{icml2022}

\usepackage{hyperref}
\usepackage{url}
\usepackage[utf8]{inputenc} %
\usepackage[T1]{fontenc}    %
\usepackage{hyperref}       %
\usepackage{url}            %
\usepackage{booktabs}       %
\usepackage{amsfonts}       %
\usepackage{nicefrac}       %
\usepackage{microtype}      %
\usepackage{xcolor}         %
\usepackage{multirow}
\usepackage{amsmath} %
\usepackage{amssymb}  %
\usepackage{graphicx}
\usepackage{epstopdf}
\usepackage{amsmath}
\usepackage{mathtools}
\usepackage{dsfont}
\usepackage{tikz}
\usepackage{siunitx}
\usepackage{hyperref}
\DeclareMathOperator*{\argmin}{argmin}
\DeclareMathOperator*{\argmax}{argmax}   %

\usepackage[font=footnotesize]{caption}
\usepackage{subcaption}
\usepackage[noabbrev,capitalize]{cleveref}
\usepackage{svg}
\usepackage{balance}    %
\usepackage{amsthm}
\usepackage{tcolorbox}
\usepackage{dsfont}
\usepackage{mathrsfs}
\usepackage[title,toc,titletoc,page]{appendix}
\usepackage{amsmath,stmaryrd,graphicx}
\usepackage{mathtools}
\usepackage{placeins}

\newtheoremstyle{boldStyle}%
  {\topsep}%
  {\topsep}%
  {\itshape}%
  {0pt}%
  {\bfseries}%
  {.}%
  { }%
  {\thmname{#1}\thmnumber{ #2}\thmnote{ (#3)}}

\newtheoremstyle{italicStyle}%
  {\topsep}%
  {\topsep}%
  {}%
  {0pt}%
  {\bfseries}%
  {.}%
  { }%
  {\thmname{#1}\thmnumber{ #2}\thmnote{ (#3)}}

\theoremstyle{boldStyle}
\newtheorem{theorem}{Theorem}
\newtheorem{lemma}{Lemma}

\newtheorem{definition}{Definition}

\theoremstyle{italicStyle}

\newcommand{\BR}{\mathbb R}

\newcommand{\E}{\mathop{\mathbb{E}}}

\setlength{\bibhang}{15pt}

\begin{document}

\twocolumn[
\icmltitle{LyaNet: A Lyapunov Framework for Training Neural ODEs}

\icmlsetsymbol{equal}{*}
\vspace{-10pt}
\begin{icmlauthorlist}
\icmlauthor{Ivan Dario Jimenez Rodriguez}{yyy}
\icmlauthor{Aaron D. Ames}{yyy}
\icmlauthor{Yisong Yue }{yyy,zzz}
\end{icmlauthorlist}
\vspace{-10pt}
\icmlaffiliation{yyy}{Department of Computational Math and Science, California Institute of Technology}
\icmlaffiliation{zzz}{Argo AI}

\icmlcorrespondingauthor{Ivan Dario Jimenez Rodriguez}{ivan.jimenez@caltech.edu}

\icmlkeywords{Neural ODE, Lyapunov, Stability Learning and Control, Robustness}

\vskip 0.3in
]
\printAffiliationsAndNotice{} 
\begin{abstract}
    We propose a method for training ordinary differential equations by using a control-theoretic Lyapunov condition for stability. Our approach, called LyaNet, is based on a novel Lyapunov loss formulation that  encourages the inference dynamics to converge quickly to the correct prediction. %
    Theoretically, we show that  minimizing Lyapunov loss guarantees exponential convergence to the correct solution and enables a novel robustness guarantee.  We also provide practical algorithms, including one that avoids the cost of backpropagating through a solver or using the adjoint method.
    Relative to standard Neural ODE training, we empirically find that LyaNet can offer improved prediction performance, faster convergence of inference dynamics, and improved adversarial robustness. Our code available at \url{https://github.com/ivandariojr/LyapunovLearning}.

\end{abstract}

\setcounter{footnote}{2} 

\section{Introduction}

The use of dynamical systems to define learnable function classes has  gained increasing attention, first sparked by the discovery of an alternative interpretation of ResNets as Ordinary Differential Equations \citep{haber_stable_2017, e_proposal_2017, ruthotto2018deep, lu2020finite}, and further popularized with Neural ODEs \citep{chen_neural_2019}. 
This fundamental insight holds the potential for a plethora of benefits including parameter efficiency, the ability to propagate probability distributions \citep{chen_neural_2019,rozen2021moser,song2020score}, accurate time-series modeling \citep{chen_learning_2021}, among others.

\begin{figure}[t]
    \centering
    \includegraphics[width=.49\linewidth]{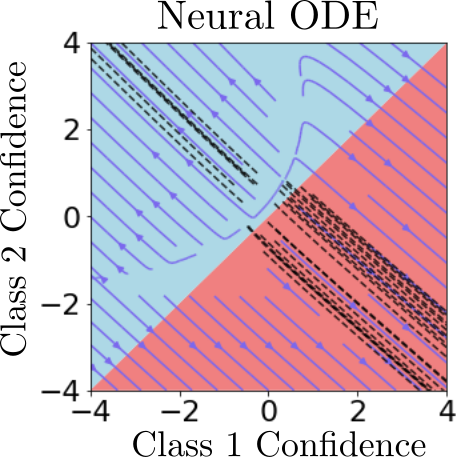}
    \includegraphics[width=.49\linewidth]{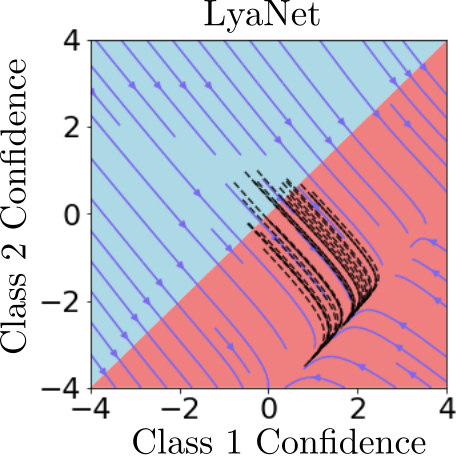} 
    \vspace{-5pt}
    \caption{\textbf{Comparing learned dynamics} on a toy prediction task. The quiver lines show the dynamics (Equation \eqref{eq:node_td}) of a 2-D state space, the dotted black lines show state trajectories from 100 initial conditions (\cref{eq:node_init,eq:node_td}), and the background coloring shows the class label from the output layer (Equation \eqref{eq:node_output_coordiantes}), with red being the correct label.  \textbf{Left:} a Neural ODE does not learn stable dynamics, where some dynamics point towards the incorrect prediction (blue region), and are especially sensitive around the initial conditions.  \textbf{Right:} an identical model trained with LyaNet has much more stable dynamics that always smoothly point towards the correct prediction (red region).
}
    \label{fig:chaos_vs_stabiilty}
\end{figure}

\textbf{Neural ODE Model Class.} Inference or the ``forward pass'' uses a continuous-time ODE, parameterized by $\theta$:  %
\begin{align}
    \label{eq:node_init} \eta(0) &= \phi_\theta(x), \ \ \ \ \  \ \ \ \ \ \mbox{(input layer)} \\
    \label{eq:node_td} \frac{d\eta}{d t} &= f_\theta(\eta, x, t), \ \ \ \mbox{(continuum of hidden layers)}\\
    \label{eq:node_output_coordiantes} \hat{y}(t) &= \psi_\theta(\eta(t)). \ \ \ \ \ \mbox{(output layer)} 
\end{align}
Given input $x$, one makes a prediction (i.e., inference) by solving the ODE specified by Equations \eqref{eq:node_init} \& \eqref{eq:node_td}, and computing the output via Equation \eqref{eq:node_output_coordiantes}.  Without loss of generality, we assume \eqref{eq:node_td} evolves in the time interval $[0, 1]$, i.e., $\hat{y}(t)$ is typically computed at $t=1$, although one could in principle compute $\hat{y}(t)$ at any $t\in[0,1]$.   We also assume that $x\in \BR^n$, $\eta \in H \subset \BR^k$, $\theta \in \Theta \subseteq \mathbb{R}^l$, and $y\in\BR^m$. The state space $H$ is assumed to be bounded and path connected.

\textbf{Motivation.}
The standard learning approach is to differentiate through the ODE solution with techniques such as the adjoint method \citep{chen_neural_2019,kidger2021hey},  which typically does not impose desirable structure, such as stability, within the learned dynamics.\footnote{Some prior work has explored learning dynamical systems that are stable but not explicitly to a correct prediction \cite{ruthotto2018deep,manek_learning_2020,bai_deep_2019,bai_stabilizing_2021}, which can lead to a tension between stability and accuracy that we aim to avoid.}   A such, the inference dynamics can exhibit  unstable behavior that are difficult to integrate, leading to suboptimal prediction performance and solution fragility.
There can also be high computational cost from generating roll-outs during training.

To build some intuition, consider \cref{fig:chaos_vs_stabiilty}, which depicts the flow of two dynamical systems trained to solve a classification task (with almost identical nominal performance). In the left plot, the dynamics from standard Neural ODE training do not reliably converge toward the correct prediction (red region) throughout the state space. This instability leads to slower convergence and incorrect predictions given minor perturbations of the initial conditions (which is related to adversarial robustness). Our goal is to learn a stable dynamical system as shown in the right plot, which reliably and quickly converges to the correct prediction across a robust set of initial conditions.

\textbf{Our Contribution.}
In this work, we study how to train dynamical systems using the control-theoretic principle of Lyapunov
stability.  
Our contributions are as follows:
\vspace{-5pt}
\begin{itemize}
\item We propose LyaNet, a framework for learning dynamical systems with stable inference dynamics.  Our approach is based on a novel Lyapunov loss formulation that captures the degree of violation of the dynamics from satisfying the Lyapunov condition for exponential stability \citep{khalil2002nonlinear,ames2014rapidly}.  
\vspace{-2pt}
\item Theoretically, we show that minimizing Lyapunov loss will: 1) guarantee that the  dynamical system satisfies a Lyapunov condition, and thus during inference or the ``forward pass'' has dynamics that exponentially converge, or stabilize, to the correct prediction; and 2) enable a novel adversarial robustness guarantee inspired by concepts in robust control. %
\vspace{-2pt}
\item We develop practical algorithms for (approximately) optimizing Lyapunov loss, including a Monte Carlo approach that avoids the cost of backpropagating through an ODE solver. Our MC approach exploits the local-to-global structure of the Lyapunov condition where the exponential convergence guarantee arises from every state satisfying a local invariance. 
\vspace{-2pt}
\item We evaluate  LyaNet on a variety of computer vision datasets.  Compared to standard Neural ODE training, we find that LyaNets enjoy competitive or superior prediction accuracy, and  can offer improved adversarial robustness.  We also show that the inference dynamics converge much faster, which implies early inference termination preserves prediction accuracy.
\end{itemize}

\section{Preliminaries}

\subsection{Additional Details on Neural ODEs}
\label{sec:node}

 To have unique solutions for all time, it is sufficient for an ODE to have an initial condition and a globally Lipschitz time derivative as shown in \cref{eq:node_init} and \cref{eq:node_td}, respectively. We thus assume that the dynamics functions we learn are globally uniformly Lipschitz. %
 This assumption is not overly onerous since most neural networks are compositions of globally Lipschitz preserving functions such as ReLUs, Convolutions, max or regular affine functions.

\Cref{eq:node_td} is actually generalization of the original Neural ODE formulation in that $f$ explicitly depends on $x$.  This generalization is sometimes referred to as the Augmented Neural ODE \citep{dupont2019augmented} or Data-controlled Neural ODE \citep{massaroli2020dissecting}. %

\textbf{Connection to ResNet.} One can think of the hidden layers in a ResNet architecture as a discrete-time Euler approximation to  Equation \eqref{eq:node_td} \cite{haber_stable_2017, e_proposal_2017}, where each discretized hidden layer $\eta_t$ is modeled as: 
\begin{eqnarray}
\eta_t = \eta_{t-\delta} + \delta f_{\theta}(\eta_{t-\delta},x,t). \ \ \ \mbox{(ResNet hidden layer)} \label{eqn:resnet}
\end{eqnarray}
It is easy to see that Equation \eqref{eq:node_td} is the continuum limit of Equation \eqref{eqn:resnet} as $\delta\rightarrow0$ (assuming continuity of $f$).

\subsection{Supervised Learning as Inverse Control}
\label{sec:supervised}

We consider the standard supervised learning setup, where we are given a training set of input/output pairs, $(x,y)\sim D$, and the goal is to find a parameterization of our model that minimizes a supervised loss over the training data:
\begin{align}
\argmin_{\theta \in \Theta} \quad & \sum_{(x,y) \sim D}\mathcal{L}(\hat{y}_x(1), y), 
\label{eq:supervised}
\end{align}
where $\hat{y}_x(1)$ is shorthand for  \cref{eq:node_init,eq:node_td,eq:node_output_coordiantes}.

As is typical in deep learning, the standard approach for training Neural ODEs is via backpropagation through  \cref{eq:supervised}.   
The ``end-to-end'' training optimization problem is equivalent to the following finite-time optimal control problem (using just a single $(x,y)$ for brevity):
\begin{align}
    \argmin_{\theta} \quad & \mathcal{L}(\hat{y}(1), y), \label{eq:ftoptc}  \\
    \text{s.t. }& \frac{\partial \eta}{\partial t} = f_\theta(\eta(t), x, t), \nonumber\\
    & \eta(0) = \phi_\theta(x), \nonumber \\
    & \hat{y}(1) = \psi_\theta(\eta(1)). \nonumber
\end{align}
Differentiating $\theta$ through \eqref{eq:ftoptc} requires computing $\frac{\partial \eta}{\partial \theta}$, which was shown to be possible from rollouts of the dynamics using either backpropagation through the solver or the adjoint method \cite{chen_neural_2019,e_proposal_2017}.

\textbf{Challenges.}
In Equation \eqref{eq:ftoptc}, there is no explicit penalty or regularization for intermediate states of the dynamical system. 
As such, even if \eqref{eq:ftoptc} is optimized, the resulting dynamics $\eta(t)$ can exhibit problematic behavior.  Indeed, one can observe such issues in \cref{fig:chaos_vs_stabiilty} where the Neural ODE dynamics are unstable, and in \cref{fig:low_dimensional} where the Neural ODE learns a fragile solution. 
Furthermore, training in this fashion can have high computational costs from generating roll-outs and the inherent numerical difficulty of integrating unstable dynamics.  Our LyaNet approach addresses these limitations via a control-theoretic learning objective.

\subsection{Lyapunov Conditions for Stability}
\label{sec:lyapunov}

In control theory, a stable dynamical system implies that all solutions in some region around an equilibrium point flow to that point. Lyapunov theory generalizes this concept by  reasoning about convergence to states that minimize a potential function $V$.
These potential functions are a special case of dynamic projections \cite{taylor2019pss}.

\begin{definition}[Dynamic Projection \cite{taylor2019pss}]
\label{def:dynproj}
A continuously differentiable function $V: H \rightarrow \BR$ is a dynamic projection if there exist constants $\underline{\sigma}, \overline{\sigma} > 0$ and an $\eta^*$ in $H$ satisfying:
\begin{align}
   \forall \eta \in H:\ \  \underline{\sigma} \lVert \eta-\eta^* \rVert \leq V(\eta) \leq \overline{\sigma} \lVert \eta-\eta^* \rVert .
    \label{eq:dynproj}
\end{align}
\end{definition}

We can now define exponential stability with respect to $V$.  As is common in many nonlinear convergence analyses, analyzing the behavior of a potential function is typically much easier than directly reasoning about the full dynamics.

\begin{definition}[Exponential Stability]
    \label{def:exp_stab}
    We say that the ODE defined in \cref{eq:node_init,eq:node_td} is   \textbf{exponentially stable} if there exists a positive definite dynamic projection potential function $V$ and a constant $\kappa>0$, such that all solution trajectories $\eta(t)$ of the ODE  for all $t \in [0,1]$ satisfy: 
    \begin{align}
        V(\eta(t)) \leq V(\eta(0)) e^{-\kappa t}.
    \end{align}
\end{definition}

Exponential stability implies that the dynamics converge exponentially fast to states with minimal $V$. Later, we will instantiate $V$ using supervised loss.\footnote{For instance, in \cref{fig:chaos_vs_stabiilty}, we want $V\approx0$ only within the red region.  Our definition of $V$ will also involve \cref{eq:node_output_coordiantes}.}  Exponential stability is desirable because: 1) it guarantees fast convergence to desired states (as defined by $V$) after integrating for finite time (e.g., for $t\in[0,1]$); and 2) it has implications for adversarial robustness, discussed later.

In order to guarantee exponential stability, we will impose additional structure, as described in the following theorem.%

\begin{theorem}[Exponentially Stabilizing Control Lyapunov Function (ES-CLF) Implies Exponential Stability (\citet{ames2014rapidly})]
\label{thm:es-clf}
For the ODE in \cref{eq:node_init,eq:node_td}, a continuously differentiable dynamic projection $V$ is an ES-CLF if there is a constant $\kappa > 0$ such that:
\begin{align}
    \min_{\theta \in \Theta} \left[ \left. \frac{\partial V}{\partial \eta }\right|_{\eta} f_\theta(\eta, x, t) + \kappa V(\eta) \right] \leq 0 \label{eq:lya_cond}
\end{align}
holds for all $\eta \in H$ and $t \in [0, 1]$. The existence of an ES-CLF implies that there is a $\theta \in \Theta$ that can achieve:
\begin{align}
 \left.\frac{\partial V}{\partial \eta }\right|_{\eta} f_\theta(\eta, x, t) + \kappa V(\eta) \leq 0,
  \label{eq:lyapunov_achieved_bound}
\end{align}
and furthermore the ODE using $\theta$ is exponentially stable with respect to $V$ (and constant $\kappa$).
\label{thm:lyapunov}
\end{theorem}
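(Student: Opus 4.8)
The plan is to prove the statement in two stages. First, I would extract from the pointwise condition \eqref{eq:lya_cond} a single parameter $\theta \in \Theta$ for which the closed-form inequality \eqref{eq:lyapunov_achieved_bound} holds at every $\eta \in H$ and every $t \in [0,1]$. Second, I would show that along any solution trajectory of the ODE obtained with this $\theta$, the scalar function $t \mapsto V(\eta(t))$ decays at rate at least $\kappa$, via a Gr\"onwall / comparison argument, and then check that $V$ satisfies the requirements of \cref{def:exp_stab}.

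For the first stage, note that \eqref{eq:lya_cond} says precisely that, for each fixed $(\eta,t)$, the quantity $\frac{\partial V}{\partial \eta}\big|_{\eta} f_\theta(\eta,x,t) + \kappa V(\eta)$ can be made nonpositive by some choice of $\theta$, i.e. the set of admissible ``decrease-inducing'' parameters at $(\eta,t)$ is nonempty. Passing from this pointwise-in-$(\eta,t)$ statement to the existence of one fixed $\theta$ that works simultaneously for all $(\eta,t)$ is the content of calling $V$ an ES-CLF: in the classical pointwise-feedback setting this is a selection argument producing a (Lipschitz) feedback law, analogous to the min-norm controller of \citet{ames2014rapidly}; in the present parametric setting it amounts to assuming the family $\{f_\theta\}_{\theta\in\Theta}$ is expressive enough to contain such a selection. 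I would make this hypothesis explicit and then simply fix such a $\theta$, which yields \eqref{eq:lyapunov_achieved_bound} at all $\eta \in H$, $t\in[0,1]$.

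For the second stage — the analytic core — fix the input $x$ and let $\eta(t)$ be the unique solution of \eqref{eq:node_init}, \eqref{eq:node_td} on $[0,1]$, which exists, is unique, and is $C^1$ by the global Lipschitz assumption on $f_\theta$. Set $W(t) := V(\eta(t))$. Since $V$ is continuously differentiable, the chain rule gives
\[
\dot W(t) = \left.\frac{\partial V}{\partial \eta}\right|_{\eta(t)} \frac{d\eta}{dt} = \left.\frac{\partial V}{\partial \eta}\right|_{\eta(t)} f_\theta(\eta(t),x,t) \;\le\; -\kappa\, V(\eta(t)) = -\kappa\, W(t),
\]
where the inequality is \eqref{eq:lyapunov_achieved_bound} evaluated at $\eta = \eta(t)$. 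Introducing $u(t) := e^{\kappa t} W(t)$, which is $C^1$, we get $\dot u(t) = e^{\kappa t}\big(\dot W(t) + \kappa W(t)\big) \le 0$, so $u$ is nonincreasing on $[0,1]$ and hence $u(t) \le u(0)$, that is, $V(\eta(t)) \le V(\eta(0)) e^{-\kappa t}$ for all $t \in [0,1]$. Finally, the dynamic-projection lower bound $\underline{\sigma}\lVert \eta - \eta^* \rVert \le V(\eta)$ shows $V$ is positive definite about $\eta^*$, so $V$ qualifies as the potential function in \cref{def:exp_stab}, and the ODE using $\theta$ is exponentially stable with respect to $V$ and constant $\kappa$.

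I expect the only genuine obstacle to be the first stage: the step from ``$\min_\theta[\cdot] \le 0$ pointwise in $(\eta,t)$'' to ``one fixed $\theta$ satisfies \eqref{eq:lyapunov_achieved_bound} everywhere'' is not automatic and must be underwritten either by compactness of $\Theta$ plus continuity and a measurable/continuous selection, or — more in keeping with this paper — by an expressiveness assumption on $\{f_\theta\}$. The second stage is the standard comparison-lemma computation and should go through essentially verbatim once a valid $\theta$ is in hand; the only point to verify there is the regularity needed to differentiate $W = V\circ\eta$, which is supplied by $V \in C^1$ together with the global Lipschitz hypothesis guaranteeing $C^1$ trajectories.
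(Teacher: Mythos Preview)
The paper does not actually supply its own proof of this theorem: it is stated with attribution to \citet{ames2014rapidly} and no proof appears in the appendix (only \cref{lma:lyaxtropy}, \cref{thm:consistency}, and \cref{thm:adv_robust} are proved there). So there is nothing in the paper to compare your argument against line by line.

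That said, your proposal is correct and is exactly the standard route. Your Stage~2 is the usual comparison/Gr\"onwall computation; in fact the paper deploys the same mechanism (explicitly invoking the Comparison Lemma, as in \citet{taylor2019pss}) in its proof of \cref{thm:adv_robust}, so your analytic core is fully in line with the techniques the authors rely on. Your diagnosis of Stage~1 is also accurate: passing from the pointwise $\min_\theta[\cdot]\le 0$ to a single $\theta$ valid for all $(\eta,t)$ is not automatic in the parametric setting and needs either a selection argument or an expressiveness hypothesis. The paper's own stance on this, stated in the ``Connection to Learnability'' paragraph, is the latter --- they treat the existence of such a $\theta$ as a realizability condition and say that ``a natural way to prove that a $V$ is an ES-CLF is to find (i.e., learn) a parameter $\theta$ that satisfies \cref{eq:lyapunov_achieved_bound}.'' So your caveat matches the paper's implicit assumption.
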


\textbf{Local-to-Global Contraction Structure.}
\cref{eq:lyapunov_achieved_bound} is essentially a contraction condition on $V$ with respect to time (with $\kappa$ controlling the rate of contraction). One can further interpret \cref{eq:lyapunov_achieved_bound} as a local invariance property: the condition only depends on the local state $\eta$ rather than, say, the entire trajectory.  In essence, Lyapunov theory exploits this local-to-global structure so that establishing a local contraction-based invariance implies global stability.

\textbf{Control Lyapunov Functions.}
The potential function used in \cref{thm:lyapunov} is called  a control Lyapunov function (CLF).  The main difference between CLFs and classic Lyapunov functions is the additional minimization over $\theta$.  One can think of the parameters $\theta$ as a ``controller'' and \cref{thm:lyapunov} establishes conditions when there exists such a controller that can render the dynamics exponentially stable.\footnote{Conventional applications of \cref{thm:es-clf} focus on designing controllers to stabilize a given physical system \cite{ames2014rapidly}.}  An exponentially stabilizing CLF (ES-CLF) is a CLF where $\kappa$ in \cref{eq:lya_cond,eq:lyapunov_achieved_bound} is strictly greater than zero.

\textbf{Connection to Learnability.}
The minimization in \cref{eq:lya_cond} can be interpreted as a statement about learnability or realizability. Satisfying \cref{eq:lya_cond} equates to the family of parameters $\Theta$ realizing exponential stability with respect to $V$.  A natural way to prove that a $V$ is an ES-CLF is to find (i.e., learn) a parameter $\theta$ that satisfies \cref{eq:lyapunov_achieved_bound}.

\textbf{Learning Exponentially Stable Systems.}
In this paper, we aim to shape a (highly overparameterized) system to satisfy a Lyapunov condition for stabilizing to predictions with minimal supervised loss.
Prior work has explored learning dynamical systems that are stable but not explicitly to a correct prediction \cite{ruthotto2018deep,manek_learning_2020,bai_deep_2019,bai_stabilizing_2021}, which can lead to a tension between stability and accuracy.
Compared to conventional work in Lyapunov analysis, our goal can be viewed as the dual of the more common goal of finding a Lyapunov function to associate with a pre-specified system.

\begin{figure*}[t]
    \centering
    \includegraphics[width=1\textwidth]{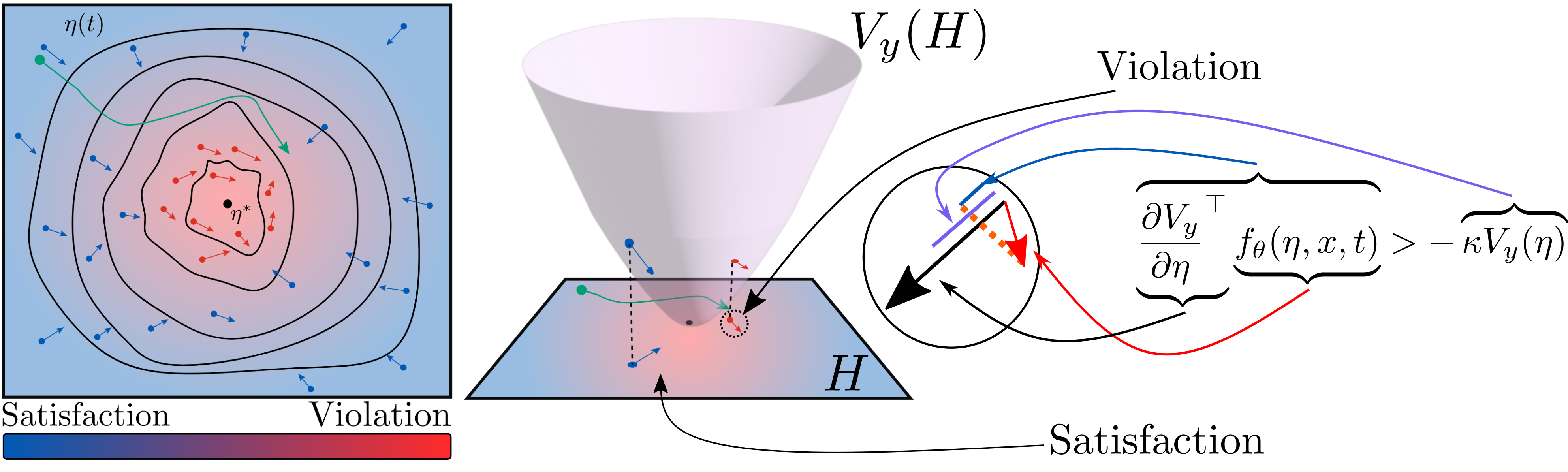}
    \vspace{-20pt}
    \caption{\textbf{Left:} a phase space plot a dynamical system  (\cref{eq:node_td}), along with level sets of the potential function $V_y$ (\cref{eq:V}) which in this example is minimized at $\eta^*$. The blue arrows represent flows that locally satisfy the Lyapunov exponential stability condition \cref{eq:lyapunov_achieved_bound}), while red arrows violate it.  The background coloring indicates a local measure of this violation, as captured in the point-wise Lyapunov loss (\cref{eq:ptwise_lya_loss}). The green line denotes an example trajectory $\eta(t)$ (\cref{eq:node_td}), which in this case does not stabilize to $\eta^*$ which has minimal $V_y$.
    \textbf{Right:} depicting the geometric correspondence between the point-wise Lyapunov loss  and the 1-D projected dynamics of $V_y$, where the inequality is a re-arrangement of terms.   At any state $\eta$, if the point-wise Lyapunov loss is positive (i.e., the depicted inequality is satisfied) then the 1-D projected dynamics at $V_y(\eta)$ is not guaranteed to be exponentially stabilizing to 0.  Conversely, choosing a $\theta$ to break the depicted inequality (and thus achieve zero Lyapunov loss) will guarantee exponential stability.
    }
    \vspace{-10pt}
    \label{fig:lyapunov_convergence}
\end{figure*}

\section{LyaNet Framework}
\label{sec:method}

We now present LyaNet, our Lyapunov framework for training ODEs of the form specified by \cref{eq:node_init,eq:node_td,eq:node_output_coordiantes}.  As alluded to in \cref{sec:lyapunov}, our goal is to find a parameterization $\theta$ of the ODE that satisfies the Lyapunov exponential stability condition in  \cref{thm:es-clf} with respect to a potential function $V$. We develop the formulation in two steps:
\vspace{-5pt}
\begin{enumerate}
    \item 
    \cref{sec:V}: For a given supervised loss, we define an appropriate  potential function $V$.  
    \item \cref{sec:lya_loss}: We define the Lyapunov loss which captures the degree of violation from satisfying the condition in \cref{eq:lyapunov_achieved_bound} that implies exponential stability. %
\end{enumerate}
\vspace{-5pt}
Theoretically, we show that optimizing Lyapunov loss  implies the learned ODE exponentially stabilizes to predictions with minimal supervised loss (\cref{thm:consistency}), which in turn implies a novel adversarial robustness guarantee (\cref{thm:adv_robust}).\footnote{Our results assume the function class is expressive enough to minimize the Lyapunov loss, which we expect from overparameterized dynamical systems.}  We present practical learning algorithms in \cref{sec:algs}.

\subsection{Potential Function for Supervised Loss}
\label{sec:V}

For a given input-output pair $(x,y)$ and supervised loss $\mathcal{L}$, define the following potential function:
\begin{align}
V_y(\cdot ) & := \mathcal{L}(\psi_\theta(\cdot), y), \label{eq:V}
\end{align} 
where the input to $V_y$ depends on $x$, and $\psi_\theta$ is from \cref{eq:node_output_coordiantes}.  Typically, one would input the hidden state at some time $t$, $\eta_\theta(t)$, where the $\theta$ subscript denotes the parameters in \cref{eq:node_init,eq:node_td}.
An additional technical requirement is the continuous-differentiability of $\psi_\theta$, which is satisfied by typical neural network architectures. 

\textbf{Implications.} It is easy to see that $V$ in \cref{eq:V} is minimized when the dynamical system converges to states $\eta$ with zero supervised loss.\footnote{We later show in \cref{fig:ctn_lya_convergence} the convergence rate of inference, which we can exactly interpret as $V$ for cross entropy loss.}  In the following, we further develop the formulation so that we can interpret $V$ as an exponentially stabilizing control Lyapunov function, and thus invoke \cref{thm:es-clf} to prove exponential stability of the inference dynamics to a loss-minimizing prediction.  

\textbf{Truncated Cross Entropy Loss.} In this paper, we focus on the standard cross entropy loss that is widely used for classification.  As a theoretical technical detail, our formulation and analysis use the truncated cross entropy loss defined as: $\mathcal{L}(\cdot):=\max\{0,\mathcal{L}_{ce}(\cdot)-\gamma\}$, where $\mathcal{L}_{ce}$ is cross entropy, and $\gamma>0$ is a small constant. The main difference is that the truncated version attains $\mathcal{L}=0$ at a finite point (whereas cross entropy only attains 0 at infinity), which is required for the stability analysis.  In practice, we can choose $\gamma$ to be machine precision, and ignore it in the learning algorithms. 
\textbf{Dynamic Projections.}
Following the arguments in \cref{sec:lyapunov}, the first step is showing that $V$ in \cref{eq:V} is a dynamic projection, which is analyzed in the following for truncated cross entropy. See \cref{apx:proof_lyaxtropy} for proof.

\begin{lemma}
For the dynamical system described in \cref{eq:node_init,eq:node_td,eq:node_output_coordiantes}, $V_y(\eta)$  in \cref{eq:V} with $\mathcal{L}$ as truncated cross entropy loss is a dynamic projection.
\label{lma:lyaxtropy}
\end{lemma}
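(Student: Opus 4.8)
The plan is to verify the two requirements in \cref{def:dynproj} for $V_y=\max\{0,\,\mathcal{L}_{ce}(\psi_\theta(\cdot),y)-\gamma\}$: the regularity of $V_y$, and the existence of $\eta^*$ and constants $\underline\sigma,\overline\sigma>0$ with $\underline\sigma\lVert\eta-\eta^*\rVert\le V_y(\eta)\le\overline\sigma\lVert\eta-\eta^*\rVert$ on $H$. I would first record the relevant structure of the composite $h:=\mathcal{L}_{ce}(\psi_\theta(\cdot),y)$ viewed through the output coordinates $z=\psi_\theta(\eta)$: cross entropy $z\mapsto\log\sum_j e^{z_j}-z_y$ is $C^\infty$, convex, nonnegative, and globally Lipschitz in $z$ (its gradient is $\mathrm{softmax}(z)-e_y$, which is bounded); since $\psi_\theta$ is $C^1$ and $H$ is bounded, $h$ is $C^1$ and Lipschitz on $\overline H$ with some constant $L$. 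The truncation alters $h$ only on $S:=\{\eta\in H: h(\eta)\le\gamma\}$, on which $V_y\equiv 0$, so $V_y^{-1}(0)=S$; away from the threshold set $\{h=\gamma\}$, $V_y$ inherits the $C^1$ regularity of $h$, and along any solution $\eta(t)$ the map $t\mapsto V_y(\eta(t))$ is Lipschitz, which is all the downstream comparison-lemma arguments require. (I would remark that the hard $\max$ is not literally $C^1$ across $\{h=\gamma\}$; since $\gamma$ is taken at machine precision, one can either mollify the truncation or rely on this absolute-continuity-along-trajectories observation.)

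Next, the choice of $\eta^*$. By the expressiveness hypothesis the paper adopts, the zero set $S$ is nonempty, and this is exactly where truncation is essential: untruncated cross entropy attains its infimum only in the limit, so no finite $\eta^*$ would exist. Consistent with the name ``dynamic projection,'' I take $\eta^*$ to be the projection of $\eta$ onto $S$, so that $\lVert\eta-\eta^*\rVert=\mathrm{dist}(\eta,S)$ and $V_y(\eta^*)=0$. The upper bound is then immediate from Lipschitz continuity:
\[
V_y(\eta)=V_y(\eta)-V_y(\eta^*)\le\max\{0,\,h(\eta)-h(\eta^*)\}\le\lvert h(\eta)-h(\eta^*)\rvert\le L\,\lVert\eta-\eta^*\rVert ,
\]
so $\overline\sigma=L$ works (if $h$ is constant on $H$ the claim is degenerate and handled trivially).

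The main obstacle is the \emph{lower} bound, which asks for more than nonnegativity of $V_y$: it requires $V_y$ to grow at least linearly as $\eta$ leaves $S$, with no higher-order flat directions at $\partial S$. I would split this into a ``far'' and a ``near'' regime (the bound is trivial on $S$ itself). Far from $S$, the function $\eta\mapsto V_y(\eta)/\mathrm{dist}(\eta,S)$ is continuous and strictly positive on the compact set $\{\eta\in\overline H:\mathrm{dist}(\eta,S)\ge\epsilon\}$, hence bounded below by a positive constant. Near $\partial S$, one needs the local estimate $V_y(\eta)=h(\eta)-\gamma\gtrsim\mathrm{dist}(\eta,S)$, which follows once the gradient $\nabla h=(\mathrm{softmax}(\psi_\theta(\eta))-e_y)^\top D\psi_\theta(\eta)$ is nonvanishing and transverse to the level set $\{h=\gamma\}$, a condition that holds for the overparameterized $\psi_\theta$ of interest since $\mathrm{softmax}(\psi_\theta(\eta))\ne e_y$ whenever $h=\gamma>0$. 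Patching the two estimates yields a uniform $\underline\sigma>0$. I would isolate this transverse-gradient / non-degeneracy statement on $\{h=\gamma\}$ as the technical heart; the $C^1$ behavior at the truncation kink and the mild assumption that $H$ is large enough to contain $S$ are the remaining points to dispatch.
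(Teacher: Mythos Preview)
Your route differs from the paper's and carries a structural gap. The paper does not attempt to verify \cref{def:dynproj} on all of $H$. It restricts to the compact solution curve $\mathcal{H}=\{\eta(t):t\in[0,1]\}$, factors cross entropy as $\mathrm{NLL}\circ\mathrm{Softmax}\circ\psi_\theta$, invokes a short ``composition of dynamic projections is a dynamic projection'' lemma, and then extracts the constants for each factor from compactness of $\mathcal{H}$, taking $\eta^*$ to be the loss minimizer along that single trajectory. The truncation plays essentially no role in the paper's argument; the constants come from the max/min of softmax entries and of $\lVert\eta(t)-\eta^*\rVert$ over the curve.

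The gap in your plan is the choice of $\eta^*$. \Cref{def:dynproj} fixes a \emph{single} $\eta^*\in H$ valid for all $\eta$, but you set $\eta^*$ to be the metric projection of $\eta$ onto $S=\{h\le\gamma\}$, so that $\eta^*=\eta^*(\eta)$ varies with $\eta$. This is not cosmetic: the truncation forces $V_y\equiv 0$ on the entire sublevel set $S$, which is generically a region with nonempty interior (any $\eta$ whose correct-class logit $\psi_\theta(\eta)_y$ dominates sufficiently lies in $S$). For any fixed $\eta^*\in S$ and any other $\eta'\in S\setminus\{\eta^*\}$ one has $V_y(\eta')=0<\underline\sigma\lVert\eta'-\eta^*\rVert$, so the lower bound in \cref{eq:dynproj} fails on $H$ for every $\underline\sigma>0$. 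Your far/near transversality argument correctly yields $V_y(\eta)\asymp\mathrm{dist}(\eta,S)$, but that is the right estimate for the wrong target. The paper sidesteps the issue by shrinking the domain from $H$ to the one-dimensional trajectory, which is also all that the downstream comparison-lemma uses in \cref{thm:consistency} and \cref{thm:adv_robust} actually need.
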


\subsection{Lyapunov Loss}
\label{sec:lya_loss}

In order to use the potential function $V$ from \cref{sec:V} to impose stability on ODE training, the remaining step is to satisfy the condition in \cref{eq:lyapunov_achieved_bound}.
To do so, we define the Lyapunov loss, starting with the point-wise version.

\begin{definition}[Point-wise Lyapunov Loss]
\label{def:lyaloss0}
For the dynamical system defined in
\cref{eq:node_init,eq:node_td,eq:node_output_coordiantes}, a single input-output pair $(x,y)$, and dynamic projection $V_y: H \rightarrow R_{\geq 0}$ from \cref{eq:V}, the point-wise Lyapunov loss is:

\begin{small}
\vspace{-15pt}
\begin{align}
    \mathscr{V}(x, y, \eta, t) \coloneqq 
    \max\left\{0,  \frac{\partial V_y}{\partial\eta}^\top f_\theta(\eta,x,t) + \kappa V_y(\eta) \right\}.
    \label{eq:ptwise_lya_loss}
\end{align}
\end{small}
\end{definition} 

Note that the point-wise Lyapunov loss  $\mathscr{V}$ is exactly the violation of the local invariance property in \Cref{eq:lyapunov_achieved_bound}.  Intuitively, for an input-output $(x,y)$, if $\mathscr{V}$ is zero for all $\eta\in H$ and $t\in[0,1]$, then \cref{eq:lya_cond} holds everywhere, and thus \cref{thm:es-clf} implies that the inference dynamics converge exponentially to a loss minimizing prediction.  \cref{fig:lyapunov_convergence} provides a depiction of this intuition, and \cref{thm:consistency} formalizes it.
The Lyapunov loss then applies the point-wise Lyapunov loss to all training examples and possible states.

\begin{definition}[Lyapunov Loss]
\label{def:lyaloss}
For the dynamical system defined in
\cref{eq:node_init,eq:node_td,eq:node_output_coordiantes}, a dataset of input-output pairs $(x,y) \sim D$,  dynamic projection $V_y: H \rightarrow R_{\geq 0}$, and $\mathscr{V}$ from \cref{eq:ptwise_lya_loss}, the Lyapunov loss is:
    \begin{align}
    \mathscr{L}(\theta) \coloneqq \E_{\substack{(x,y) \sim D}}  \left[\int_0^1 \mathscr{V}(x,y, \eta_\theta(\tau), \tau) d\mu(\tau) \right].
    \label{eq:lya_loss}
    \end{align}
\end{definition}

\begin{theorem}
\label{thm:consistency}
Consider the setting in \cref{def:lyaloss}.  If there exists a parameter $\theta^* \in \Theta$ of the dynamical system that attains $\mathscr{L}(\theta^*) = 0$, then for each $(x,y)\sim D$:
\vspace{-5pt}
\begin{enumerate}
    \item The potential function $V_y(\cdot)$ in \cref{eq:V} is an exponentially stabilizing control Lyapunov function with $\theta^*$ satisfying \cref{eq:lyapunov_achieved_bound}.
    \item For $ t \in [0,1]$, the inference dynamics satisfy the following convergence rate w.r.t. the supervised loss $\mathcal{L}$:
    \begin{align}
        \mathcal{L}(\hat{y}_\theta(t), y) \leq \mathcal{L}(\hat{y}_\theta(0), y)e^{- \kappa t},
        \label{eq:pred_convergence}
    \end{align}
    where $\hat{y}_{\theta}$ the output of \cref{eq:node_output_coordiantes} with subscript $\theta$ denoting all parameters in  \cref{eq:node_init,eq:node_td,eq:node_output_coordiantes}.
\end{enumerate}
\vspace{-5pt}
\end{theorem}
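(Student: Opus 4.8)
The plan is to reduce the two global claims to a pointwise-in-time condition along trajectories and then apply a standard comparison (Grönwall) argument. The crucial observation is that the point-wise Lyapunov loss $\mathscr{V}$ in \cref{eq:ptwise_lya_loss} is nonnegative by construction, so $\mathscr{L}(\theta^*)=0$ forces the integrand to vanish: for $D$-almost every $(x,y)$ we have $\int_0^1 \mathscr{V}(x,y,\eta_{\theta^*}(\tau),\tau)\,d\mu(\tau)=0$, and since the integrand is nonnegative and --- because $\tau\mapsto\eta_{\theta^*}(\tau)$ is continuous, $f_{\theta^*}$ is continuous, and $V_y$ is $C^1$ by \cref{lma:lyaxtropy} --- continuous in $\tau$, it must be identically zero on $[0,1]$ whenever $\mu$ has full support (e.g.\ $\mu$ Lebesgue). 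Unwinding the definition of $\mathscr{V}$, this is exactly the statement that $\theta^*$ satisfies \cref{eq:lyapunov_achieved_bound} at every reachable point $(\eta_{\theta^*}(\tau),\tau)$; combined with \cref{lma:lyaxtropy} (so $V_y$ is a genuine dynamic projection), this is precisely the assertion that $V_y$ is an exponentially stabilizing control Lyapunov function realized by $\theta^*$, which is item~1.

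For item~2, I would fix $(x,y)$ and set $v(t):=V_y(\eta_{\theta^*}(t))=\mathcal{L}(\psi_{\theta^*}(\eta_{\theta^*}(t)),y)=\mathcal{L}(\hat{y}_{\theta^*}(t),y)$, where the middle equality is \cref{eq:V} and the last is \cref{eq:node_output_coordiantes}. Since $\eta_{\theta^*}$ is absolutely continuous (it solves an ODE with Lipschitz right-hand side) and $V_y$ is $C^1$, the map $v$ is absolutely continuous with $\dot v(t)=\tfrac{\partial V_y}{\partial\eta}^\top f_{\theta^*}(\eta_{\theta^*}(t),x,t)$ for a.e.\ $t$, and by item~1 this derivative is $\le -\kappa v(t)$ for a.e.\ $t\in[0,1]$. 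Hence $\tfrac{d}{dt}\big(e^{\kappa t}v(t)\big)=e^{\kappa t}\big(\dot v(t)+\kappa v(t)\big)\le 0$ a.e., and since $t\mapsto e^{\kappa t}v(t)$ is absolutely continuous it is nonincreasing, so $e^{\kappa t}v(t)\le v(0)$, i.e.\ $v(t)\le v(0)e^{-\kappa t}$. Translating back through the identifications for $v$ gives \cref{eq:pred_convergence}. (Alternatively, once item~1 is established one can directly invoke the exponential-stability conclusion of \cref{thm:es-clf}.)

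The main obstacle I anticipate is not a deep step but getting the measure-theoretic bookkeeping right: passing from ``the $\tau$-integral vanishes'' to ``$\mathscr{V}\equiv 0$ on $[0,1]$'' needs continuity of $\mathscr{V}$ along the trajectory plus the assumption that $\mu$ charges every subinterval, and ``for each $(x,y)\sim D$'' should really be read $D$-almost everywhere. A related subtlety is that $\mathscr{L}(\theta^*)=0$ only constrains \cref{eq:lyapunov_achieved_bound} on states actually reached from $\phi_{\theta^*}(x)$, so identifying $V_y$ with a full ES-CLF over all of $H$ in item~1 implicitly leans on the expressivity/overparameterization assumption noted in the paper. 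Finally, the comparison step should be run via absolute continuity of $e^{\kappa t}v(t)$ rather than the classical $C^1$ Grönwall form, since $\eta_{\theta^*}(\cdot)$ is only a.e.\ differentiable, and one should confirm that the truncation in $\mathcal{L}$ does not spoil the $C^1$ regularity of $V_y$ used throughout --- which is exactly why the analysis is stated for truncated cross entropy.
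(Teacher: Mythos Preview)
Your proposal is correct and follows essentially the same route as the paper: use nonnegativity and continuity of $\mathscr{V}$ to pass from $\mathscr{L}(\theta^*)=0$ to the pointwise Lyapunov inequality \cref{eq:lyapunov_achieved_bound}, then read off exponential decay. Your treatment is in fact more complete than the paper's own proof, which stops after establishing $\mathscr{V}\equiv 0$ and leaves item~2 implicit via \cref{thm:es-clf}; your explicit Gr\"onwall/absolute-continuity argument for $v(t)=V_y(\eta_{\theta^*}(t))$ and your caveats about $\mu$ having full support, $D$-a.e.\ versus ``each'' $(x,y)$, and reachable-states-only are all valid refinements that the paper glosses over.
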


\cref{thm:consistency} is essentially a consistency result between minimizing Lyapunov loss and guaranteeing exponential stability on the training examples. See \cref{apx:proof_consistency} for proof.  Future directions include analyzing when the Lyapunov loss is only approximately minimized, as well as generalization.

\textbf{Choosing $\kappa$.} In essence $\kappa$ corresponds to the convergence rate of the loss dynamics. A larger $\kappa$ enables faster convergence.  However, as will come up in Section \ref{sec:algs}, larger $\kappa$ can also make the learning problem more challenging, since the dynamics will have larger magnitude.

\subsection{Adversarial Robustness}
\label{sec:robustness}

One attractive aspect of connecting learnable ODEs to control theory is the ability to leverage concepts in robust control. 
Essentially, robust control provides guarantees that a system will remain stable under perturbations, which is the same kind of guarantee studied in adversarially robust machine learning \cite{wong2018provable,raghunathan2018certified,cohen2019certified,robey2021adversarial}.  One interesting contrast with prior work on adversarially robust learning is that LyaNet does not explicitly optimize for adversarial robustness, but rather the robustness guarantee presented here directly follows from optimizing Lyapunov loss.  %

\begin{definition}[$\delta$-Stable Inference Dynamics for $(x, y)$]
A dynamical system defined in \cref{eq:node_init,eq:node_td,eq:node_output_coordiantes}, with global Lipschitz constant $L$ on $\eta(t)$ and associated
dynamic projections $V_y$ (\cref{eq:V}) has $\delta$-stable inference dynamics for example $(x,y)$ if it satisfies the following conditions:
\vspace{-5pt}
\begin{enumerate}
    \item \textbf{Correct Classification:} 
         $   \argmax_{i  \in \{1 \ldots m\} } \psi_\theta(\eta(1))_i$ $= \argmax_{i  \in \{1 \ldots m\} } y_i$.
    \item \textbf{Exponential Stability: }
        $V_y(\eta(t))$ satisfies the condition in \cref{eq:lyapunov_achieved_bound}.
    \item \textbf{$\delta$-Final Loss: }
          $  V_y(\eta(1)) \leq e^{-\kappa}V_y(\eta(0)) \leq \delta$.
\end{enumerate}
\vspace{-5pt}
\label{def:delta_class}
\end{definition}

\cref{def:delta_class} captures the general properties needed to analyze adversarial robustness. First, the learned model must correctly classify the unperturbed example (implied by minimizing $V_y$). Second, the learned dynamics must be exponentially stable w.r.t. $V_y$ (implied by minimizing Lyapunov loss). Third, at termination of inference ($t=1$), the classification loss encoded in $V_y$ is within an additive constant $\delta\geq0$ from optimal (implied by exponential stability).

\begin{theorem}[Adversarial Robustness of LyaNet]

Consider an ODE defined in \cref{eq:node_init,eq:node_td,eq:node_output_coordiantes} that has $\delta$-stable inference dynamics for input-output  pair $(x,y)$. 
Then the system given a perturbed input $x+\epsilon$ with $\lVert \epsilon \rVert_\infty \leq \overline{\epsilon}$ will produce a correct classification (\cref{def:delta_class}) so long as:
 \begin{align}
     \delta  \leq \log(2) - \frac{L \overline{\epsilon}}{\kappa}\left( 1 - e^{-\kappa}\right).
 \end{align}
 \label{thm:adv_robust}
\end{theorem}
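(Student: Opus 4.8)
The plan is to bound how much the potential function can change when the input is perturbed from $x$ to $x+\epsilon$, and then chain this with the $\delta$-stability guarantee at $t=1$ to certify that the perturbed trajectory still lands in the correct-classification region. First I would unpack what $V_y(\eta(1)) \le \delta$ means for truncated cross entropy: since $V_y(\eta) = \max\{0, \mathcal{L}_{ce}(\psi_\theta(\eta),y) - \gamma\}$ and cross entropy of the correct class is $-\log(\text{softmax}_c)$, a bound $V_y(\eta(1)) \le \log(2)$ (modulo $\gamma$) forces the softmax probability of the true class above $1/2$, hence strictly above every other class, which is exactly condition 1 of \cref{def:delta_class}. So the target inequality $\delta \le \log(2) - \frac{L\overline\epsilon}{\kappa}(1-e^{-\kappa})$ should be read as: ``the perturbation can inflate the final loss by at most $\frac{L\overline\epsilon}{\kappa}(1-e^{-\kappa})$, and starting from $\delta$ this keeps us under the $\log 2$ threshold.''

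Next I would quantify the effect of the perturbation on the trajectory. Let $\eta(t)$ be the unperturbed trajectory and $\tilde\eta(t)$ the trajectory driven by $x+\epsilon$ (same parameters $\theta$, same ODE \cref{eq:node_td}). The discrepancy $e(t) := \lVert \tilde\eta(t) - \eta(t)\rVert$ satisfies $\dot e(t) \le L_\eta e(t) + L_x \lVert\epsilon\rVert$ by global Lipschitzness of $f_\theta$ in both arguments (the paper lumps these into a single constant $L$), plus a term from the perturbed initial condition $\phi_\theta(x+\epsilon)$. A Grönwall argument then gives a bound on $e(1)$ of the form $e(1) \le \frac{L\overline\epsilon}{\kappa}(1-e^{-\kappa})$ — note the appearance of $\kappa$ here is the crucial point, and it presumably enters because the exponential-stability condition \cref{eq:lyapunov_achieved_bound} lets one replace the naive Grönwall factor $e^{L}$ (or the integral of $e^{L(1-\tau)}$) by something governed by the contraction rate $\kappa$ rather than the Lipschitz constant of the flow. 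I would look carefully at whether the intended argument applies Grönwall directly to $V_y(\tilde\eta(t))$ rather than to $\tilde\eta(t)$ itself: since $V_y$ is Lipschitz (dynamic projection, \cref{lma:lyaxtropy}) and the perturbed dynamics still nearly satisfy the Lyapunov decrease inequality up to the input-perturbation error, one gets $\frac{d}{dt}V_y(\tilde\eta(t)) \le -\kappa V_y(\tilde\eta(t)) + L\overline\epsilon$, whose solution at $t=1$ is $V_y(\tilde\eta(1)) \le e^{-\kappa}V_y(\tilde\eta(0)) + \frac{L\overline\epsilon}{\kappa}(1-e^{-\kappa})$. That is almost certainly the right route, since it produces the exact constant $\frac{1}{\kappa}(1-e^{-\kappa})$ in the statement.

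Then I would assemble: $V_y(\tilde\eta(1)) \le e^{-\kappa}V_y(\tilde\eta(0)) + \frac{L\overline\epsilon}{\kappa}(1-e^{-\kappa})$, bound $e^{-\kappa}V_y(\tilde\eta(0))$ by $\delta$ using condition 3 of $\delta$-stability (after checking that $V_y(\tilde\eta(0)) = V_y(\eta(0))$ or is otherwise controlled — the initial loss term $\mathcal{L}(\hat y(0),y)$ should be handled here, possibly this is where the ``$\le \delta$'' hypothesis is really doing its work), and conclude $V_y(\tilde\eta(1)) \le \delta + \frac{L\overline\epsilon}{\kappa}(1-e^{-\kappa}) \le \log 2$ by hypothesis, which by the first paragraph's softmax argument gives correct classification of $x+\epsilon$. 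The main obstacle I anticipate is getting the perturbed dynamics to satisfy a clean Lyapunov-type differential inequality: the hypothesis only gives $\mathscr{V} = 0$ (equivalently \cref{eq:lyapunov_achieved_bound}) along the \emph{unperturbed} states, so I need to argue that at a perturbed state $\tilde\eta$ the quantity $\frac{\partial V_y}{\partial\eta}^\top f_\theta(\tilde\eta, x+\epsilon, t) + \kappa V_y(\tilde\eta)$ is at most $L\overline\epsilon$ — this requires Lipschitz control of $\frac{\partial V_y}{\partial\eta}^\top f_\theta$ jointly in $\eta$ and $x$, which is where the global Lipschitz constant $L$ on $\eta(t)$ and the smoothness of $V_y$ and $\psi_\theta$ all get used, and where the bookkeeping of constants (making everything collapse into the single $L$ of the statement) will be the fiddly part.
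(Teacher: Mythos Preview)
Your proposal is essentially the paper's proof: derive the perturbed differential inequality $\frac{d}{dt}V_y(\tilde\eta(t)) \le -\kappa V_y(\tilde\eta(t)) + L\overline{\epsilon}$, integrate it (the paper phrases this as the comparison lemma against $\dot\gamma = -\kappa\gamma + L\overline\epsilon$), use condition~3 of \cref{def:delta_class} to set the initial value $\gamma(0)=\delta e^{\kappa}$, and then invoke the cross-entropy threshold $V_y \le \log 2 \Rightarrow$ correct class.

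One clarification removes the obstacle you anticipate in your last paragraph. Condition~2 of \cref{def:delta_class} points to \cref{eq:lyapunov_achieved_bound}, which is a pointwise inequality for \emph{every} $\eta\in H$, not merely along the unperturbed trajectory. Hence at the perturbed state $\tilde\eta(t)$ you already have $\frac{\partial V_y}{\partial\eta}\big|_{\tilde\eta}^{\!\top} f_\theta(\tilde\eta,x,t) \le -\kappa V_y(\tilde\eta)$ for free; the paper then simply adds and subtracts $\frac{\partial V_y}{\partial\eta}^{\!\top} f_\theta(\tilde\eta,x,t)$ and bounds the remaining $x$-difference by $L_V L_f\lVert\epsilon\rVert \le L\overline\epsilon$. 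No joint-$(\eta,x)$ Lipschitz bookkeeping or trajectory-difference bootstrap is needed, which is why the constant comes out exactly as $\frac{L\overline\epsilon}{\kappa}(1-e^{-\kappa})$. Your treatment of the initial condition is actually more careful than the paper's: the proof silently uses $V_y(\tilde\eta(0))=V_y(\eta(0))\le \delta e^{\kappa}$, which is justified by the constant-initialization restriction $\eta(0)\equiv 0$ stated in \cref{sec:algs}.
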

\vspace{-10pt}
See \cref{apx:adv_robust_proof} for proof. Essentially, if a system is exponentially stable, then under perturbation, the system will be exponentially stable to a relaxed set of states (the radius of the relaxation is proportional to the  perturbation magnitude).  So long as that relaxation is contained in the part of the state space that outputs the correct classification, the final prediction is also adversarially robust. Another consequence of \cref{thm:adv_robust} is that the  guarantee is stronger the more accurate the learned model is, since $\delta$ is exactly the nominal cross entropy loss of the prediction.  %

\section{Learning Algorithms}
\label{sec:algs}

We now present two algorithms for (approximately) optimizing Lyapunov loss (\cref{eq:lya_loss}).
The first approach is based on Monte Carlo sampling, and is suitable when the dimension of $\eta$ is small-to-moderate (e.g., tens to hundreds).  The second approach is based on discretized path integrals, and is suitable when the dimension of $\eta$ is very large (e.g., hundreds of thousands).  The main benefit of the MC approach is that it is easily parallelized (since it does not require solving an ODE during training), but may require too many samples to be practical in high dimensions.

The difficulty in optimizing Lyapunov loss is that the distributions for $\eta$ and $t$ are coupled, due to the inner integral in \cref{eq:lya_loss}. If one can effectively sample from this joint $(\eta,t)$ distribution, then one can minimize the (expected) Lyapunov loss by optimizing the parameters $\theta$ w.r.t. the point-wise Lyapunov loss (\cref{eq:ptwise_lya_loss}) at those samples.

\textbf{Restriction on Initial State.}
To simplify algorithm design, we restrict to learning ODEs that always initialize at $\eta_0=0$, i.e., \cref{eq:node_init} is a constant function.  The trained ODEs still perform well in practice. %

\textbf{Monte Carlo Training (\cref{alg:mc_lyanet}).}
The key idea is to sample $\eta$ and $t$ independently from  measures $\mu_H$ and $\mu_{[0,1]}$, which is efficient so long as sampling from $\mu_H$ and $\mu_{[0,1]}$ are efficient. In practice, we choose uniform for $\mu_{[0,1]}$ and discuss $\mu_H$ next.
The resulting (implicit) learning objective upper bounds the Lyapunov loss (\cref{apx:algs}).

One subtlety in defining $\mu_H$ is choosing the support set $H$ of the state space.  Earlier, $H$ had been used solely as a theoretical object, but now must be made explicit.  The key requirement is that  $H$ covers the actual states visited by the ODE during inference.  
Since the ODE function class is globally Lipschitz, we can bound how far states can evolve from the origin.  In practice, we choose $\mu_H$ to be a either a uniform on a hypercube or on hypersphere biased towards the origin, with radius of $H$ being the hypercube with corners at the states that achieve losses of  $e^{\pm \kappa}$ (\cref{apx:algs}).

\begin{algorithm}[t]
  \caption{Monte Carlo LyaNet Training}\label{alg:mc_lyanet}
  \begin{small}
  \begin{algorithmic}[1]
    \State \textbf{hyperparameters:} 
    \State \quad\quad $\mu_H$, $\mu_{[0,1]}$ \Comment{Measures on $\eta$ and $t$}
    \State \quad\quad $\Gamma$ \Comment{Number of MC samples}
    \State \quad\quad $\alpha$, $M$ \Comment{Learning rate and max iterations}
    \State \textbf{initialize} $\theta$ \Comment{Any standard NN initialization}
    \For{$i=1:M$}       
    \State $(x,y) \sim \mathcal{D}$ \Comment{Sample training data}
    \State $\{(\eta_j,t_j)\}_{j=1}^\Gamma \sim \mu_H \times \mu_{[0,1]}$  \Comment{Sample $\Gamma$  $(\eta,t)$ pairs}
    \State $\theta \gets \theta - \alpha \nabla_\theta \sum_{j} \mathscr{V}(x,y,\eta_j,t_j)$ \Comment{From \cref{eq:ptwise_lya_loss}}
    \EndFor
    \State \Return $\theta$
  \end{algorithmic}
  \end{small}
\end{algorithm}

Due to the parallel nature of random sampling, this method often outperforms conventional  Neural ODE training w.r.t.  compute time, since it avoids the sequential integration steps needed for standard backpropagation.\footnote{The computational gains remain even after significantly reducing the precision of the ODE solver.}
For instance, we found that even for 100-dimensional state spaces, only  $\Gamma=500$ samples were required to reliably approximate the integral, which can be very efficient with modern GPUs.
\textbf{Path Integral Training (\cref{alg:pi_lyanet}).}
For systems with extremely high dimensional state spaces (e.g., hundreds of thousands), or for systems that are better approximated in discrete time, we consider an alternative approach based on collecting roll-outs of the original dynamics.

The basic idea is to approximate the inner integral in Lyapunov loss using Euler integration, which uniformly discretizes the integration into $\Gamma$ segments.  We can also define a discrete-time version of the point-wise Lyapunov loss:\footnote{There also exist analogous results to \cref{thm:es-clf} for discrete-time systems \cite{zhang2009exponential}, a thorough treatment of which is beyond the scope of this work.}

\begin{small}
\vspace{-15pt}
\begin{align}
    \mathtt{V}(x,y,\eta, t_j, t_{j-1}) = 
     \max\left\{0,V_y(\eta(t_i)) + (\kappa - 1)V_y(\eta(t_{j-1}))\right\}.\label{eq:disc_lyapunov}
    \end{align}
\end{small}
\hspace{-3.3pt}The resulting discrete-time Lyapunov loss  is then:
 \begin{align}\E_{(x, y) \sim \mathcal{D}} \left[ \sum_{i=1}^\Gamma \mathtt{V}(x, y, \eta, t_i, t_{i-1}) \right]. \label{eq:pi_lya_0th}
 \end{align}
Optimizing \cref{eq:pi_lya_0th} has the  advantage of being more computationally efficient than Monte Carlo training in high-dimensional systems such as  ResNet-inspired Continuous-in-Depth architectures \citep{queiruga2020continuousindepth}.  This is due to the Monte Carlo method placing very little density on the states actually traversed by the ODE, thus requiring many samples to learn reliably. Notice, however, that the underlying inference dynamics remain continuous, we simply apply a discrete-time condition to it. Furthermore, the integral in \cref{alg:pi_lyanet} can be replaced with discrete-time dynamics for application in discrete systems.

\begin{algorithm}[t]
  \caption{Path Integral LyaNet Training}\label{alg:pi_lyanet}
  \begin{small}
  \begin{algorithmic}[1]
    \State \textbf{hyperparameters:}
    \State \quad\quad $\Gamma$ \Comment{Time discretization resolution}
    \State \quad\quad $\alpha$, $M$ \Comment{Learning rate and max iterations}
     \State \textbf{initialize} $\theta$ \Comment{Any standard NN initialization}
     \State \textbf{define} $t_0,t_1,\ldots,t_\Gamma$ \Comment{Evenly spaced from 0 to 1}
    \For{$i=1:M$}
    \State $(x,y) \sim \mathcal{D}$ \Comment{Sample training data}
    \State $\eta(t_j) \gets \int_{t_{j-1}}^{t_j}f(\eta(\tau), x, \tau) d\tau + \eta(t_{j-1}) \quad \forall_{1 \leq j \leq \Gamma}$ \Comment{Generate path integral discretization}
    \State $\theta \gets \theta - \alpha \nabla_\theta  \sum_{j} \mathtt{V}(x,y, \eta(t_j), t_j) $ \Comment{See \cref{eq:disc_lyapunov}}
    \EndFor
    \State \Return $\theta$
  \end{algorithmic}
  \end{small}
\end{algorithm}

\begin{table*}[t]
\small
  \begin{minipage}[c]{0.7\linewidth}
    \centering
\scalebox{0.9}{
    \centering
      \begin{tabular}{c|c|ccc}
       Dataset & Method & Nominal & $\ell_\infty (\epsilon=8/255)$ & $\ell_2 (\epsilon=127/255)$ \\
      \hline
      \multirow{5}*{FashionMNIST} & ResNet18 & 8.17 & 84.55 & 33.69\\
      & Continuous & 7.67 & 91.85 & 45.87\\
      & \bf Continuous LyaNet & \bf 7.18 & \bf 93.96 & \bf 46.54\\
      & Neural ODE & 8.66 & 77.9 & 29.85 \\
      & \bf LyaNet & \bf 7.9 & \bf 46.33 & \bf 25.14\\
      \hline
      \multirow{5}*{CIFAR10} & ResNet18 & 19.73 & 76.44 & 40.06\\
      & Continuous & 13.12 & 88.79 & 41.53\\
      & \bf Continuous LyaNet & \bf 12.99 & \bf 91.03 & \bf 39.82 \\
      & Neural ODE & 18.5 & 75.7 & 38.91\\
      & \bf LyaNet & \bf 17.22 & \bf 58.97 & \bf 38.25\\
      \hline
      \multirow{5}*{CIFAR100} & ResNet18 & 41.07 & 91.08 & 83.72 \\
      & Continuous & 35.96 & 97.12 & 80.34\\
      & \bf Continuous LyaNet & \bf 35.81 & \bf 97.77 & \bf 78.68\\
      & Neural ODE & 41.57 & 92.77 & 81.7\\
      & \bf LyaNet & \bf 41.07  & \bf 91.08 & \bf 81.91 
      \end{tabular}      }
  \end{minipage}
  \hfill
  \begin{minipage}[c]{0.3\linewidth}
    \caption{ 
    Test error percentage comparison for networks trained with our approach and the equivalent network trained with other methods. Continuous LyaNet uses the Continuous Net architecture, and LyaNet uses the Neural ODE architecture. Adversarial robustness results are trained with PGD. LyaNet trained models have similar performance to their counterparts trained by back-propagating through solutions. We note that LyaNet trained with the Monte Carlo approach has a significant robustness enhancement in low-dimensional datasets. \vspace{-10pt}}
    \label{fig:preliminary_results}
      \end{minipage}
\end{table*}

\section{Experiments}
\label{sec:experiments}

In our experiments
we address the following  questions:
\vspace{-5pt}
\begin{itemize}
    \item How well do LyaNets perform compared to their baseline counterparts, including under adversarial attacks?
    \item How quickly can LyaNet inference converge, thus allowing early inference termination?
    \item How does the decision boundary of LyaNet compare with that of other methods?
\end{itemize}
\vspace{-5pt}

\subsection{Experiment Setup}

We compare with three  model classes: 
\vspace{-5pt}
\begin{itemize}
    \item ResNet-18 \citep{he_deep_2015}, which can be viewed as an Euler integrated dynamical system.
    \item A Continuous-in-Depth Network, as presented by \citet{queiruga2020continuousindepth}, trained with both the adjoint method (labeled Continuous) and Path-Integral LyaNet (labeled Continuous LyaNet).
    \item A Data-Controlled Neural ODE where ResNet-18 is used to learn a parameterization of the dynamics trained with both the adjoint method (labeled Neural ODE) and Monte Carlo LyaNet (labeled LyaNet). In both cases the number of hidden dimensions correspond to the number of classes.
\end{itemize}
\vspace{-5pt}

We evaluate primarily on three computer vision datasets: FashionMNIST, CIFAR-10 and CIFAR-100.  More details on the training can be found in \cref{apx:exp_details}

\subsection{Benchmark Experiments}

Table \ref{fig:preliminary_results} shows the primary quantitative results for both standard test prediction error as well as prediction error under PGD bounded adversarial attacks.  %
For standard or nominal test error, we see that our LyaNet variants achieve competitive or superior performance compared to their counterparts trained via direct backpropagation.

For our robustness results, LyaNet trained using the Monte Carlo methods exhibits  improved robustness for CIFAR-10 and FashionMNIST, despite not being explicitly trained to handle adversarial attacks. CIFAR-100 has a significantly larger hidden state dimensionality, and this larger hidden state also hurt nominal performance.  These results are consistent with \cref{thm:adv_robust}, which requires a low nominal error ($\delta$) relative to perturbation size ($\epsilon$) to guarantee adversarial robustness.

We finally note that the Path Integral  method used to train the continuous-in-depth network was not able to improve adversarial robustness. 
This may be due to the coarseness of the path integration, which breaks the continuous exponential stability condition needed to guarantee robustness in \cref{thm:adv_robust} (since we only proved exponential stability for continuous-time integration and not discrete-time).

These results suggest that the LyaNet framework offers promising potential to improve the reliability of Neural ODE training, and more gains may be possible with improved training algorithms for optimizing Lyapunov loss.

\begin{figure}[t]
    \centering
    \includegraphics[width=.49\linewidth]{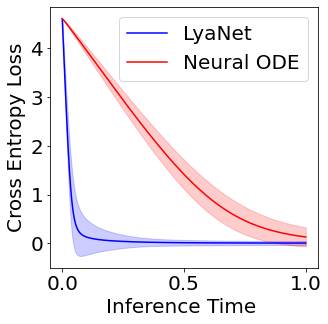}
    \includegraphics[width=0.49\linewidth]{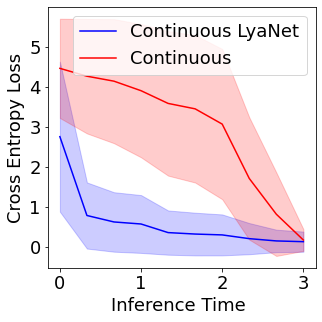}
    \vspace{-5pt}
    \caption{
    Plotting inference time vs prediction loss (if we stopped inference early and made a classification) on 512 correctly classified test examples from CIFAR-100. We see across two model classes that the LyaNet inference dynamics converge much faster.
    }
    \vspace{-10pt}
    \label{fig:ctn_lya_convergence}
\end{figure}

\begin{figure*}[t]
    \centering
    \includegraphics[width=1.15\linewidth]{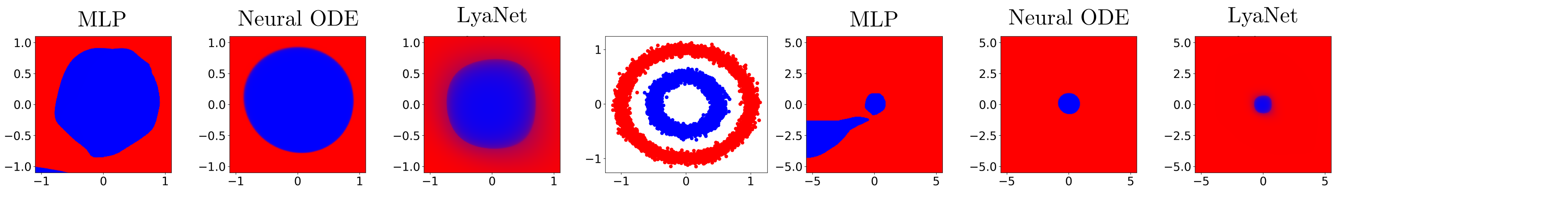}
    \vspace{-20pt}
    \caption{\textbf{Comparing softmax outputs} of learned dynamical systems.  \textbf{Center:} depicting the training data with two classes (red and blue) in a  2-D input space. \textbf{Left 3 Plots:} comparing the softmax outputs, where we see that MPL and Neural ODE have sharp transitions between the two classes, while LyaNet has much smoother transitions.  \textbf{Right 3 Plots:} showing a zoomed out version of left 3 plots.
    \vspace{-10pt}
    }
    \label{fig:low_dimensional}
\end{figure*}

\subsection{Inference Convergence Speed}
Early inference termination refers to stopping the dynamics before $t=1$, and output $\hat{y}(t)$.  %
Intuitively,, one might expect LyaNets to perform well under early inference termination.  

\cref{fig:ctn_lya_convergence} shows the results for the CIFAR-100 dataset.
We see that both the continuous-in-depth models and the data-controlled models show significantly better performance during early termination when trained with LyaNet compared to standard backpropagation.  
These results also demonstrate that the internal dynamics of the learned models are fundamentally different when trained with   LyaNet versus standard backpropagation.

\subsection{Inspecting Decision Boundaries}
To provide some intuition on the the type of decision boundaries being learned, we finally run an experiment on a low-dimensional dataset as shown in \cref{fig:low_dimensional}. All models were trained with the same number of data-points and tested on a uniform grid. The Neural ODE, MLP and (Monte Carlo) LyaNet have the same number of parameters and the models are trained to similar accuracy. We note that because of the low-dimensional setting, this is a best-case scenario for the LyaNet approach. The MLP exhibits the fragility we expect from conventional neural networks. Although the Neural ODE captures the data well, its decision boundary is very sharp and biased towards the outer circle. Meanwhile, the LyaNet is able to capture the uncertainty between the inner and outer circles, and this uncertain boundary can be interpreted as related to adversarial robustness. %

\section{Related Work}
\textbf{Prior Work in Learning Dynamical Systems.}
Most prior work has focused on using the adjoint method to infer dynamics \citep{chen_neural_2019, antonelo_physics-informed_2021}. The proposal by  \citet{e_proposal_2017} even discusses properties like controllability but ultimately frames inference as an optimal control problem.  Although optimal control is a powerful framework, for ODE training it may be challenging  to avoid fragile solutions and attain strong stability guarantees.

Prior work have studied how to impose stability in various forms, including guaranteeing stability around an equilibrium point
\citep{manek_learning_2020,bai_stabilizing_2021,bai_deep_2019,schlaginhaufen2021learning}, and designing architectures that stable by construction \citet{haber_stable_2017}. Drawbacks of these approaches are that they do not guarantee  exponential stability everywhere in the state space, or create a tension between accuracy and stability by using less expressive models.

\textbf{Prior Work in Learning and Control.}
There have been many other intersection points between learning and control theory.
For example, \citet{wilson_lyapunov_2018} use Lyapunov theory to analyze the dynamical system implicit in the momentum updates of stochastic gradient descent, \citet{DBLP:journals/corr/abs-1810-13400, info_theoretic_mpc}  differentiate through controllers like MPC, and \citet{peng2020learning} learn control policies directly for a real system. \citet{richards_lyapunov_2018} safely learn physical dynamics by taking into account Lyapunov-like conditons during training. \citet{chang_neural_2020} use an adversarial approach to learn Lyapunov functions for control.  \citet{cheng2019control} uses a control prior to regularize reinforcement learning, in some cases with robustness properties of the learned policy.  \citet{chow2018lyapunov} uses a Lyapunov condition to improve safety conditions in reinforcement learning.  \citet{dean2020sample} study the sample complexity of learning controllers such as for LQR.  \citet{rosolia2017learning} study how to learn cost-to-go value functions for  MPC under safety constraints. Learning Lyapunov functions for controlling hybrid dynamical systems \citet{chen2021learning}. Learning Lyapunov stability certificate for a black-box neural network policy \citet{richards2018lyapunov}. Neural networks have also been used to learn contraction metrics (a similar concept to Lyapunov) for the control and estimation of stochastic systems \citet{tsukamoto2020neural}. 

\section{Discussion \& Future Work}
\vspace{-5pt}
This work studies a new connection between learning and control in the context of training network architectures built using dynamical systems. In contrast to studying physical systems where the state description has a real physical interpretation that cannot be modified, the internal state of Neural ODEs is highly controllable (or modifiable) due to overparameterization.  As a consequence, the ``control'' problem here can rely on relatively simple control theoretic tools due being able to design (or train) the entire system, rather controlling only a part of the system to force the whole system to be stable (e.g., controlling the motor of a seqway so that it stands upright).  The tight integration of the Lyapunov condition via optimizing Lyapunov loss also automatically confers many benefits such as adversarial robustness.

There some many interesting directions for future work. For instance, our learning algorithms are the natural starting points for optimizing Lyapunov loss, but can be significantly improved.  It may also be interesting to combine LyaNet with other approaches for improving adversarial robustness.  One can also study other concepts in control theory, such as barrier conditions \cite{ames2016control} instead of Lyapunov conditions, to guarantee set invariance rather than stability. Finally, the fast inference dynamics of LyaNet may have applicability in time-critical inference settings.

\FloatBarrier
\bibliographystyle{icml2022}
\bibliography{references}

\begin{thebibliography}{44}
\providecommand{\natexlab}[1]{#1}
\providecommand{\url}[1]{\texttt{#1}}
\expandafter\ifx\csname urlstyle\endcsname\relax
  \providecommand{\doi}[1]{doi: #1}\else
  \providecommand{\doi}{doi: \begingroup \urlstyle{rm}\Url}\fi

\bibitem[Ames et~al.(2014)Ames, Galloway, Sreenath, and
  Grizzle]{ames2014rapidly}
Ames, A.~D., Galloway, K., Sreenath, K., and Grizzle, J.~W.
\newblock Rapidly exponentially stabilizing control lyapunov functions and
  hybrid zero dynamics.
\newblock \emph{IEEE Transactions on Automatic Control}, 59\penalty0
  (4):\penalty0 876--891, 2014.

\bibitem[Ames et~al.(2016)Ames, Xu, Grizzle, and Tabuada]{ames2016control}
Ames, A.~D., Xu, X., Grizzle, J.~W., and Tabuada, P.
\newblock Control barrier function based quadratic programs for safety critical
  systems.
\newblock \emph{IEEE Transactions on Automatic Control}, 62\penalty0
  (8):\penalty0 3861--3876, 2016.

\bibitem[Amos et~al.(2018)Amos, Rodriguez, Sacks, Boots, and
  Kolter]{DBLP:journals/corr/abs-1810-13400}
Amos, B., Rodriguez, I. D.~J., Sacks, J., Boots, B., and Kolter, J.~Z.
\newblock Differentiable {MPC} for end-to-end planning and control.
\newblock \emph{CoRR}, 2018.
\newblock URL \url{http://arxiv.org/abs/1810.13400}.

\bibitem[Antonelo et~al.(2021)Antonelo, Camponogara, Seman, de~Souza, Jordanou,
  and Hubner]{antonelo_physics-informed_2021}
Antonelo, E.~A., Camponogara, E., Seman, L.~O., de~Souza, E.~R., Jordanou,
  J.~P., and Hubner, J.~F.
\newblock Physics-informed neural nets for control of dynamical systems, 2021.

\bibitem[Bai et~al.(2019)Bai, Kolter, and Koltun]{bai_deep_2019}
Bai, S., Kolter, J.~Z., and Koltun, V.
\newblock Deep equilibrium models, 2019.

\bibitem[Bai et~al.(2021)Bai, Koltun, and Kolter]{bai_stabilizing_2021}
Bai, S., Koltun, V., and Kolter, J.~Z.
\newblock Stabilizing equilibrium models by jacobian regularization, 2021.

\bibitem[Chang et~al.(2020)Chang, Roohi, and Gao]{chang_neural_2020}
Chang, Y.-C., Roohi, N., and Gao, S.
\newblock Neural lyapunov control, 2020.

\bibitem[Chen et~al.(2019)Chen, Rubanova, Bettencourt, and
  Duvenaud]{chen_neural_2019}
Chen, R. T.~Q., Rubanova, Y., Bettencourt, J., and Duvenaud, D.
\newblock Neural ordinary differential equations.
\newblock In \emph{Neural Information Processing Systems (NeurIPS)}, 2019.
\newblock URL \url{http://arxiv.org/abs/1806.07366}.

\bibitem[Chen et~al.(2021{\natexlab{a}})Chen, Amos, and
  Nickel]{chen_learning_2021}
Chen, R. T.~Q., Amos, B., and Nickel, M.
\newblock Learning neural event functions for ordinary differential equations.
\newblock In \emph{International Conference on Learning Representations
  (ICLR)}, 2021{\natexlab{a}}.
\newblock URL \url{http://arxiv.org/abs/2011.03902}.

\bibitem[Chen et~al.(2021{\natexlab{b}})Chen, Fazlyab, Morari, Pappas, and
  Preciado]{chen2021learning}
Chen, S., Fazlyab, M., Morari, M., Pappas, G.~J., and Preciado, V.~M.
\newblock Learning lyapunov functions for hybrid systems.
\newblock In \emph{Proceedings of the 24th International Conference on Hybrid
  Systems: Computation and Control}, pp.\  1--11, 2021{\natexlab{b}}.

\bibitem[Cheng et~al.(2019)Cheng, Verma, Orosz, Chaudhuri, Yue, and
  Burdick]{cheng2019control}
Cheng, R., Verma, A., Orosz, G., Chaudhuri, S., Yue, Y., and Burdick, J.
\newblock Control regularization for reduced variance reinforcement learning.
\newblock In \emph{International Conference on Machine Learning}, pp.\
  1141--1150. PMLR, 2019.

\bibitem[Chow et~al.(2018)Chow, Nachum, Du{\'e}{\~n}ez-Guzm{\'a}n, and
  Ghavamzadeh]{chow2018lyapunov}
Chow, Y., Nachum, O., Du{\'e}{\~n}ez-Guzm{\'a}n, E.~A., and Ghavamzadeh, M.
\newblock A lyapunov-based approach to safe reinforcement learning.
\newblock In \emph{NeurIPS}, 2018.

\bibitem[Cohen et~al.(2019)Cohen, Rosenfeld, and Kolter]{cohen2019certified}
Cohen, J., Rosenfeld, E., and Kolter, Z.
\newblock Certified adversarial robustness via randomized smoothing.
\newblock In \emph{International Conference on Machine Learning}, pp.\
  1310--1320. PMLR, 2019.

\bibitem[Dean et~al.(2020)Dean, Mania, Matni, Recht, and Tu]{dean2020sample}
Dean, S., Mania, H., Matni, N., Recht, B., and Tu, S.
\newblock On the sample complexity of the linear quadratic regulator.
\newblock \emph{Foundations of Computational Mathematics}, 20\penalty0
  (4):\penalty0 633--679, 2020.

\bibitem[Dupont et~al.(2019)Dupont, Doucet, and Teh]{dupont2019augmented}
Dupont, E., Doucet, A., and Teh, Y.~W.
\newblock Augmented neural odes, 2019.

\bibitem[E(2017)]{e_proposal_2017}
E, W.
\newblock A proposal on machine learning via dynamical systems.
\newblock \emph{Communications in Mathematics and Statistics}, 5\penalty0
  (1):\penalty0 1--11, 2017.

\bibitem[Haber \& Ruthotto(2017)Haber and Ruthotto]{haber_stable_2017}
Haber, E. and Ruthotto, L.
\newblock Stable architectures for deep neural networks.
\newblock \emph{Inverse Problems}, 34\penalty0 (1):\penalty0 014004, dec 2017.
\newblock \doi{10.1088/1361-6420/aa9a90}.
\newblock URL \url{https://doi.org/10.1088/1361-6420/aa9a90}.

\bibitem[He et~al.(2015)He, Zhang, Ren, and Sun]{he_deep_2015}
He, K., Zhang, X., Ren, S., and Sun, J.
\newblock Deep residual learning for image recognition, 2015.

\bibitem[Khalil(2002)]{khalil2002nonlinear}
Khalil, H.
\newblock Nonlinear systems third edition.
\newblock \emph{Patience Hall}, 2002.

\bibitem[Kidger et~al.(2021)Kidger, Chen, and Lyons]{kidger2021hey}
Kidger, P., Chen, R. T.~Q., and Lyons, T.
\newblock "hey, that's not an ode": Faster ode adjoints via seminorms, 2021.

\bibitem[Kim(2020)]{kim2020torchattacks}
Kim, H.
\newblock Torchattacks: A pytorch repository for adversarial attacks.
\newblock \emph{arXiv preprint arXiv:2010.01950}, 2020.

\bibitem[Liu et~al.(2021)Liu, Bernstein, Meister, and Yue]{liu2021learning}
Liu, Y., Bernstein, J., Meister, M., and Yue, Y.
\newblock Learning by turning: Neural architecture aware optimisation, 2021.

\bibitem[Lu et~al.(2020)Lu, Zhong, Li, and Dong]{lu2020finite}
Lu, Y., Zhong, A., Li, Q., and Dong, B.
\newblock Beyond finite layer neural networks: Bridging deep architectures and
  numerical differential equations, 2020.

\bibitem[Manek \& Kolter(2020)Manek and Kolter]{manek_learning_2020}
Manek, G. and Kolter, J.~Z.
\newblock Learning stable deep dynamics models, 2020.

\bibitem[Massaroli et~al.(2020)Massaroli, Poli, Park, Yamashita, and
  Asama]{massaroli2020dissecting}
Massaroli, S., Poli, M., Park, J., Yamashita, A., and Asama, H.
\newblock Dissecting neural odes.
\newblock \emph{arXiv preprint arXiv:2002.08071}, 2020.

\bibitem[Muller(1959)]{hypersphere_sampling}
Muller, M.~E.
\newblock A note on a method for generating points uniformly on n-dimensional
  spheres.
\newblock \emph{Commun. ACM}, 2\penalty0 (4):\penalty0 19–20, April 1959.
\newblock ISSN 0001-0782.
\newblock \doi{10.1145/377939.377946}.
\newblock URL \url{https://doi.org/10.1145/377939.377946}.

\bibitem[Peng et~al.(2020)Peng, Coumans, Zhang, Lee, Tan, and
  Levine]{peng2020learning}
Peng, X.~B., Coumans, E., Zhang, T., Lee, T.-W., Tan, J., and Levine, S.
\newblock Learning agile robotic locomotion skills by imitating animals, 2020.

\bibitem[Queiruga et~al.(2020)Queiruga, Erichson, Taylor, and
  Mahoney]{queiruga2020continuousindepth}
Queiruga, A.~F., Erichson, N.~B., Taylor, D., and Mahoney, M.~W.
\newblock Continuous-in-depth neural networks, 2020.

\bibitem[Raghunathan et~al.(2018)Raghunathan, Steinhardt, and
  Liang]{raghunathan2018certified}
Raghunathan, A., Steinhardt, J., and Liang, P.
\newblock Certified defenses against adversarial examples.
\newblock In \emph{International Conference on Learning Representations}, 2018.

\bibitem[Richards et~al.(2018{\natexlab{a}})Richards, Berkenkamp, and
  Krause]{richards2018lyapunov}
Richards, S.~M., Berkenkamp, F., and Krause, A.
\newblock The lyapunov neural network: Adaptive stability certification for
  safe learning of dynamical systems.
\newblock In \emph{Conference on Robot Learning}, pp.\  466--476. PMLR,
  2018{\natexlab{a}}.

\bibitem[Richards et~al.(2018{\natexlab{b}})Richards, Berkenkamp, and
  Krause]{richards_lyapunov_2018}
Richards, S.~M., Berkenkamp, F., and Krause, A.
\newblock The lyapunov neural network: Adaptive stability certification for
  safe learning of dynamical systems, 2018{\natexlab{b}}.

\bibitem[Robey et~al.(2021)Robey, Chamon, Pappas, Hassani, and
  Ribeiro]{robey2021adversarial}
Robey, A., Chamon, L., Pappas, G., Hassani, H., and Ribeiro, A.
\newblock Adversarial robustness with semi-infinite constrained learning.
\newblock \emph{Advances in Neural Information Processing Systems}, 34, 2021.

\bibitem[Rosolia \& Borrelli(2017)Rosolia and Borrelli]{rosolia2017learning}
Rosolia, U. and Borrelli, F.
\newblock Learning model predictive control for iterative tasks. a data-driven
  control framework.
\newblock \emph{IEEE Transactions on Automatic Control}, 63\penalty0
  (7):\penalty0 1883--1896, 2017.

\bibitem[Rozen et~al.(2021)Rozen, Grover, Nickel, and Lipman]{rozen2021moser}
Rozen, N., Grover, A., Nickel, M., and Lipman, Y.
\newblock Moser flow: Divergence-based generative modeling on manifolds.
\newblock \emph{Advances in Neural Information Processing Systems}, 34, 2021.

\bibitem[Ruthotto \& Haber(2018)Ruthotto and Haber]{ruthotto2018deep}
Ruthotto, L. and Haber, E.
\newblock Deep neural networks motivated by partial differential equations,
  2018.

\bibitem[Schlaginhaufen et~al.(2021)Schlaginhaufen, Wenk, Krause, and
  Dorfler]{schlaginhaufen2021learning}
Schlaginhaufen, A., Wenk, P., Krause, A., and Dorfler, F.
\newblock Learning stable deep dynamics models for partially observed or
  delayed dynamical systems.
\newblock \emph{Advances in Neural Information Processing Systems}, 34, 2021.

\bibitem[Song et~al.(2020)Song, Sohl-Dickstein, Kingma, Kumar, Ermon, and
  Poole]{song2020score}
Song, Y., Sohl-Dickstein, J., Kingma, D.~P., Kumar, A., Ermon, S., and Poole,
  B.
\newblock Score-based generative modeling through stochastic differential
  equations.
\newblock In \emph{International Conference on Learning Representations
  (ICLR)}, 2020.

\bibitem[Sontag \& Wang(1995)Sontag and Wang]{sontag1995characterizations}
Sontag, E.~D. and Wang, Y.
\newblock On characterizations of the input-to-state stability property.
\newblock \emph{Systems \& Control Letters}, 24\penalty0 (5):\penalty0
  351--359, 1995.

\bibitem[Taylor et~al.(2019)Taylor, Dorobantu, Krishnamoorthy, Le, Yue, and
  Ames]{taylor2019pss}
Taylor, A.~J., Dorobantu, V.~D., Krishnamoorthy, M., Le, H.~M., Yue, Y., and
  Ames, A.~D.
\newblock A control lyapunov perspective on episodic learning via projection to
  state stability.
\newblock \emph{2019 IEEE 58th Conference on Decision and Control (CDC)}, Dec
  2019.
\newblock \doi{10.1109/cdc40024.2019.9029226}.
\newblock URL \url{http://dx.doi.org/10.1109/CDC40024.2019.9029226}.

\bibitem[Tsukamoto et~al.(2020)Tsukamoto, Chung, and
  Slotine]{tsukamoto2020neural}
Tsukamoto, H., Chung, S.-J., and Slotine, J.-J.~E.
\newblock Neural stochastic contraction metrics for learning-based control and
  estimation.
\newblock \emph{IEEE Control Systems Letters}, 5\penalty0 (5):\penalty0
  1825--1830, 2020.

\bibitem[Williams et~al.(2017)Williams, Wagener, Goldfain, Drews, Rehg, Boots,
  and Theodorou]{info_theoretic_mpc}
Williams, G., Wagener, N., Goldfain, B., Drews, P., Rehg, J.~M., Boots, B., and
  Theodorou, E.~A.
\newblock Information theoretic mpc for model-based reinforcement learning.
\newblock In \emph{2017 IEEE International Conference on Robotics and
  Automation (ICRA)}, pp.\  1714--1721, 2017.
\newblock \doi{10.1109/ICRA.2017.7989202}.

\bibitem[Wilson et~al.(2018)Wilson, Recht, and Jordan]{wilson_lyapunov_2018}
Wilson, A.~C., Recht, B., and Jordan, M.~I.
\newblock A lyapunov analysis of momentum methods in optimization, 2018.

\bibitem[Wong \& Kolter(2018)Wong and Kolter]{wong2018provable}
Wong, E. and Kolter, Z.
\newblock Provable defenses against adversarial examples via the convex outer
  adversarial polytope.
\newblock In \emph{International Conference on Machine Learning}, pp.\
  5286--5295. PMLR, 2018.

\bibitem[Zhang et~al.(2009)Zhang, Abate, Hu, and Vitus]{zhang2009exponential}
Zhang, W., Abate, A., Hu, J., and Vitus, M.~P.
\newblock Exponential stabilization of discrete-time switched linear systems.
\newblock \emph{Automatica}, 45\penalty0 (11):\penalty0 2526--2536, 2009.

\end{thebibliography}

\onecolumn
\appendix
\section{Proofs}
\subsection{Auxiliary Facts}
\begin{definition}[Dynamic Projection]
A continuously differentiable function $\psi : \BR^k \to \BR^m$ is a dynamic projection if there exist $\underline{\sigma}, \overline{\sigma} \in \BR_{>0}$ so that:
\begin{align}
    \underline{\sigma} \lVert \eta(t) \rVert \leq \lVert \psi(\eta(t)) \rVert \leq \overline{\sigma} \lVert \eta(t) \rVert
\end{align}
\label{def:dyn_proj}
\end{definition}

\begin{lemma}[Composition of Dynamic Projections]
The composition of dynamic projections is a dynamic projection. \label{lma:dyn_proj_lemma}
\end{lemma}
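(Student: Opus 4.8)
\textbf{Proof proposal for Lemma \ref{lma:dyn_proj_lemma} (Composition of Dynamic Projections).}

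The plan is to unwind the definition directly: if $\psi_1 : \BR^{k} \to \BR^{p}$ and $\psi_2 : \BR^{p} \to \BR^{m}$ are both dynamic projections in the sense of \cref{def:dyn_proj}, I want to show that $\psi_2 \circ \psi_1$ is again a dynamic projection, i.e., there are constants $\underline{\sigma}, \overline{\sigma} \in \BR_{>0}$ with $\underline{\sigma}\lVert \eta \rVert \leq \lVert \psi_2(\psi_1(\eta)) \rVert \leq \overline{\sigma}\lVert \eta \rVert$ for all $\eta$. First I would record the two pairs of constants guaranteed by the hypothesis: $\underline{\sigma}_1 \lVert \eta \rVert \leq \lVert \psi_1(\eta) \rVert \leq \overline{\sigma}_1 \lVert \eta \rVert$ and $\underline{\sigma}_2 \lVert \zeta \rVert \leq \lVert \psi_2(\zeta) \rVert \leq \overline{\sigma}_2 \lVert \zeta \rVert$. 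Then I substitute $\zeta = \psi_1(\eta)$ into the second chain of inequalities and compose with the first chain.

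Concretely, the upper bound follows from $\lVert \psi_2(\psi_1(\eta)) \rVert \leq \overline{\sigma}_2 \lVert \psi_1(\eta)\rVert \leq \overline{\sigma}_2\,\overline{\sigma}_1 \lVert \eta \rVert$, and the lower bound from $\lVert \psi_2(\psi_1(\eta)) \rVert \geq \underline{\sigma}_2 \lVert \psi_1(\eta)\rVert \geq \underline{\sigma}_2\,\underline{\sigma}_1 \lVert \eta \rVert$. So the claimed constants are $\underline{\sigma} = \underline{\sigma}_1\underline{\sigma}_2 > 0$ and $\overline{\sigma} = \overline{\sigma}_1\overline{\sigma}_2 > 0$, both strictly positive since each factor is. The remaining hypothesis to verify is continuous differentiability of the composite, which is immediate: $\psi_1$ and $\psi_2$ are each continuously differentiable by assumption, and the composition of $C^1$ maps is $C^1$ by the chain rule. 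An extension by induction then covers finite compositions of more than two dynamic projections, which is the form in which the lemma is used (e.g., in the proof of \cref{lma:lyaxtropy}, where $V_y$ is built by composing $\psi_\theta$ with the loss-derived projection).

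I do not anticipate a genuine obstacle here — the statement is essentially a bookkeeping exercise in chaining norm bounds, and the only thing to be slightly careful about is keeping the positivity of the constants (a product of strictly positive reals is strictly positive) and ensuring the inner function's output lands in the domain where the outer function's bounds apply, which holds because both bounds are stated for all points of the respective domains. If one wants to be pedantic, one should also note the monotonicity step uses that $\underline{\sigma}_2 \geq 0$ so that multiplying the inequality $\lVert\psi_1(\eta)\rVert \geq \underline{\sigma}_1\lVert\eta\rVert$ by $\underline{\sigma}_2$ preserves its direction; this is fine since $\underline{\sigma}_2 > 0$.
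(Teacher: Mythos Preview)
Your proposal is correct and follows essentially the same approach as the paper: both arguments simply chain the two pairs of norm bounds by substituting the inner map's output into the outer map's inequality, arriving at the product constants $\underline{\sigma}_1\underline{\sigma}_2$ and $\overline{\sigma}_1\overline{\sigma}_2$. Your version is slightly more thorough in also verifying the $C^1$ condition and noting the inductive extension, but the core idea is identical.
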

\begin{proof}
Let $\psi$ and $V$ both be dynamic projections with corresponding constants $\underline{\sigma}^\psi, \underline{\sigma}^V, \overline{\sigma}^\psi, \overline{\sigma}^V$.
Then $V \circ \psi$ satisfies the following inequalities:

\begin{align}
\underline{\sigma}^V \lVert \psi(\eta(t)) \rVert \leq \lVert V(\psi(\eta(t))) \rVert \leq \overline{\sigma}^V \lVert \psi(\eta(t)) \rVert \\
\underline{\sigma}^\psi \underline{\sigma}^V \lVert \eta(t) \rVert \leq \lVert V(\psi(\eta(t))) \rVert \leq \overline{\sigma}^V \overline{\sigma}^\psi \lVert \eta(t) \rVert \\
\end{align}

Therefore $V \circ \psi$ is a dynamic projection with constants $\underline{\sigma}^\psi \underline{\sigma}^V$ and $\overline{\sigma}^V \overline{\sigma}^\psi $.

\end{proof}

\subsection{Proof of \cref{lma:lyaxtropy}}
\label{apx:proof_lyaxtropy}
\begin{proof}
The cross entropy loss function is the composition of the negative-log-likelihood loss function, softmax operation and the output coordinates $\psi_\theta$. 
Using \cref{lma:dyn_proj_lemma}  all we need to prove is that for the set of admissible times where $\eta(t)$ evolves, the negative log-likelihood and the softmax operation are dynamic projections. 
Note that $\psi_\theta$ is assumed to be a dynamic projection since it is usually a globally Lipschitz neural network.

Since we assume $f$ is globally Lipschitz we know $\eta(t)$ exists and is unique for all time. 
Since $t$ is in the compact set $[0, 1]$, we can conclude that $\mathcal{H} = \{ \eta(t): t \in [0,1]\}$ is also compact.
We define $\eta^* = \argmin_{h \in \mathcal{H}} \mathcal{L}(\psi_\theta(\eta), y)$, in other words the closest point in $\mathcal{H}$ to the correct classification under the cross-entropy loss function.
Since $\mathcal{H}$ is compact we know that supremums and infimums with respect to the objective $\lVert \eta - \eta^* \rVert$ are attained within the set.
Let $\overline{h} = \argmax_{\eta \in \mathcal{H}} \lVert \eta - \eta^* \rVert $ and $\underline{h} = \argmin_{\eta \in \mathcal{H}} \lVert \eta - \eta^* \rVert $.

Therefore, we can conclude that the Softmax operation is a dynamic projection with the following bounds:

\begin{align}
     \frac{\sigma^\psi}{\lVert \overline{h} \rVert }\lVert \eta(t) - \eta^* \rVert \leq \lVert \text{Softmax}(\psi(\eta(t))) \rVert \leq \frac{\sigma^\psi}{\lVert \underline{h} \rVert } \lVert \eta(t) - \eta^* \rVert
\end{align}

Since $\psi(\eta(t))$ is bounded we know that all entries $i$ of the output of the Softmax operation satisfy the following inequality for all $t \in [0, 1]$:

\begin{align}
 0 < \text{Softmax}(\psi(\eta(t)))_{i} < 1   
\end{align}
Let $\underline{s} = \min_{i \in m, t \in [0,1]} \text{Softmax}(\psi(\eta(t)))_{i}$ and $\overline{s} = \max_{i \in m, t \in [0,1]} \text{Softmax}(\psi(\eta(t)))_{i}$. Then the negative log likelihood for one-hot-encoded label $y$, $V_y$ satisfies the dynamic projection constrains:

\begin{align}
    \frac{\sigma^\psi}{-\log(\overline{s})\lVert \overline{h} \rVert }\lVert \eta(t) - \eta^*\rVert \leq V_y(\text{Softmax}(\psi(\eta(t))))  \leq \frac{\sigma^\psi}{-\log(\underline{s})\lVert \underline{h} \rVert } \lVert \eta(t) - \eta^* \rVert
\end{align}

\end{proof}

\subsection{Proof of \cref{thm:consistency}}
\label{apx:proof_consistency}
\cref{thm:adv_robust} is inspired by concepts underlying Input-to-State and Projection-to-State stability \cite{sontag1995characterizations}. 
The key technical detail is leveraging the Comparison Lemma in a similar fashion as  \citet{taylor2019pss}. 
\begin{proof}
    We begin by rearranging the terms of the integral:

\begin{align}
    \mathscr{L}(\theta) = E_{\substack{(x,y) \sim D}}  \left[\int_0^1 \mathscr{V}(x,y, \eta_\theta(\tau), \tau) d\mu(\tau) \right] &= \int_{D}  \int_0^1 \mathscr{V}(x,y, \eta_\theta(\tau), \tau) d\mu(\tau) dD((x,y)) \\ 
    &= \int_{D \times [0,1]}   \mathscr{V}(x,y, \eta_\theta(\tau), \tau) d\mu\times D(\tau, (x,y)) \\ 
\end{align}

Recall that 
    \begin{align}
     \mathscr{V}(x, y, \eta, t) &= \max\left\{0,  \frac{\partial V_y}{\partial\eta}^\top f_\theta(\eta,x,t) + \kappa V_y(\eta) \right\}
    \end{align}
We note that  that $\mathscr{V}$ is being integrated over a bounded domain and it satisfies the following properties:

\begin{enumerate}
    \item $\mathscr{V}(x, y, \eta(t), t) \geq 0$ for all values of $x, y, \eta$ and $t$.
    \item $\mathscr{V}(x, y, \eta(t), t)$ is continuous since it is the maximum of two continuous functions: the $0$ function and  $\frac{\partial V_y}{\partial\eta}^\top f_\theta(\eta,x,t) + \kappa V_y(\eta)$ which is continuous since it is differentiable.
\end{enumerate}

From these two facts we can conclude that $\mathscr{L}(\theta) = 0 $ implies that for all $t, (x,y) \in [0,1] \times D$ the function $\mathscr{V}(x, y, \eta(t), t) = 0$. This follows from the standard calculus argument that if the function weren't zero at a point there would be an $\epsilon$ region surrounding that point that would integrate to a strictly positive value. Since $\mathscr{V} \geq 0 $, integrating it over any region must also be non negative which would be a contradiction given that we assumed the integral is non-negative.
\end{proof}

\subsection{Proof of \cref{thm:adv_robust}}

\cref{thm:adv_robust} is inspired by concepts underlying Input-to-State and Projection-to-State stability \cite{sontag1995characterizations}. The key technical detail is leveraging the Comparison Lemma in a similar fashion as  \citet{taylor2019pss}.

\label{apx:adv_robust_proof}
\begin{proof}
We will use the following notation for the time derivative of $V_y$:
\begin{align}
 \dot{V}_y(\eta(t),x,t) = \frac{d}{dt} V_y(\eta(t)) = \left. \frac{\partial V_y}{\partial \eta}\right|_{\eta = \eta(t)}^\top f(\eta(t), x, t)   
\end{align}

We now begin the derivation.\footnote{Note that the choice of norm is arbitrary given equivalence of norms in finite dimensional spaces}

\begin{align}
     &\dot{V}_y(\eta,x + \epsilon,t) \\
     &= \dot{V}_y(\eta,x,t) + \dot{V}_y(\eta,x + \epsilon,t) - \dot{V}_y(\eta,x,t)&& \text{Add 0} \\
     &\leq \dot{V}_y(\eta,x,t) + \lvert \dot{V}_y(\eta,x + \epsilon,t) - \dot{V}_y(\eta,x,t) \rvert \\
     &\leq \dot{V}_y(\eta,x,t) +  \left\lvert \frac{\partial V_y}{ \partial \eta}^\top f(\eta,x + \epsilon,t) - \frac{\partial V_y}{ \partial \eta}^\top f(\eta,x,t) \right\rvert  \\
     &\leq \dot{V}_y(\eta,x,t) + \underbrace{L_{V}L_{f}}_{L}\lVert \epsilon \rVert && \text{From Global Uniform Lipschitz property of $V$ and $f$} \\
     &\leq \dot{V}_y(\eta,x,t) +  L \overline{\epsilon} && \text{ Adversarial disturbance bound } \lVert \epsilon \rVert \leq \overline{\epsilon}  \\
     &\leq -\kappa V_{y}(\psi(\eta(t)) + L \overline{\epsilon}&& \text{From Lyapunov Exponential Stability condition in \cref{thm:lyapunov}} \label{eq:intermediate_upper_bound}
\end{align}

We now consider the dynamical system that achieves the upper bound \cref{eq:intermediate_upper_bound} in preparation to apply the comparison lemma:

\begin{align}
    \dot{\gamma} &= -\kappa \gamma(t) + L \overline{\epsilon}
\end{align}

This dynamical system is linear and therefore has solutions of the following form:

\begin{align}
    \gamma(t) = e^{-\kappa t} c + \frac{L \overline{\epsilon}}{\kappa}
\end{align}

Now we solve for $c$ in terms of the initial condition of this system:
\begin{align}
 \gamma(0) &= c + \frac{L \overline{\epsilon}}{\kappa}  \\
 &\rightarrow c = \gamma(0)  - \frac{L \overline{\epsilon}}{\kappa}\\
 &\rightarrow \gamma(t) = e^{-\kappa t} \gamma(0) + \frac{L \overline{\epsilon} }{\kappa}(1 - e^{-\kappa t}) 
\end{align}
Where the last line comes from plugging the resulting value of $c$ back into the solution $\gamma(t)$. 

To apply the Comparison Lemma, we need a choice of $\gamma(0)$ that satisfies $V_y(\eta(0)) \leq \gamma(0)$. 
Recall from the $\delta$-Final Loss condition of \cref{def:delta_class} that $e^{-\kappa} V_y(\eta(0)) \leq \delta$. This implies that $V_y(\eta(0)) \leq \delta e^{\kappa}$ so that we can pick $\gamma(0) = \delta e^{\kappa}$ to satisfy the initial condition inequality of the comparison lemma. Plugging this value into our form of the solution for $\gamma(t)$ results in the following:
\begin{align}
    \gamma(t) = e^{\kappa(1-t)} \delta + \frac{L \overline{\epsilon}}{\kappa}(1 - e^{-\kappa t}) 
\end{align}

Therefore, since the following conditions hold:
\begin{enumerate}
    \item $\dot{V_y}(\eta(t))$ and $\dot{\gamma}(\gamma(t))$ are continuous functions of time and state.
    \item $\dot{V_y}(\eta(t)) \leq \dot{\gamma}(\gamma(t))$
    \item $V_y(\eta(0)) \leq \gamma(0)$
\end{enumerate}
We can conclude from the comparison lemma that $V_y(\eta(t)) \leq \gamma(t)$. 

Recall from the definition of $V_y$ as the cross entropy loss, that $V_y(\eta(t)) \leq -\log(\frac{1}{2}) = \log(2)$ is sufficient for to guarantee Correct Classification as defined in \cref{def:delta_class}. Therefore, it suffices to show that $\gamma(t) \leq \log(2)$. Plugging in the definition of $\gamma(t)$ evaluated at the end of the integration time $t=1$ we obtain:

\begin{align}
    \delta + \frac{L \overline{\epsilon}}{\kappa}(1 - e^{-\kappa}) \leq \log(2) \\
    \delta \leq \log(2) - \frac{L \overline{\epsilon}}{\kappa}(1 - e^{-\kappa})
\end{align}

Which results in the desired bound.

\end{proof}

\newpage
\section{Details on Learning Algorithms}
\label{apx:algs}

\textbf{Monte Carlo Method.}
We can formulate the objective of the training optimization problem as a Monte Carlo integral as follows:
\begin{align}
    \min_{\theta \in \Theta} \mathscr{L}(\theta) &\leq \min_{\theta \in \Theta} \E_{\substack{(x, y) \sim \mathcal{D} \\ \eta\sim \mu_H \\ t \sim \mu_{(0,1)}}}\left[ \mathscr{V}(x, y, \eta, t)\right] \\
   &\leq \min_{\theta \in \Theta}  \E_{(x, y) \sim \mathcal{D}}\left[ \int_{H \times (0,1)} \mathscr{V}(x, y, \eta, t) d\mu_{H \times (0,1)} (\eta,t) \right],
\end{align}
which establishes it as an upper bound on Lyapunov loss.

Some of the stated assumptions require justification. 
First, we know that perfect classifications using the cross entropy loss function require infinite inputs in one coordinate.
Exponential stability to a correct classification under the cross entropy lyapunov dynamics would thus imply exponential growth for the hidden state.
To avoid numerical stability issues we bound the state-space where the dynamics can evolve through our choice  $\kappa$ and by noting that most neural network architectures are globally Lipschitz. 
This implies uniqueness and existence of solutions for all time.
When taking into account that we only compute a finite time integral, we can conclude that the image of a bounded set of initial conditions through the dynamics's evolution will remain be bounded. 
Therefore, we only need to enforce a bounded set of initial condition to have a bounded reachable set.
This means our predictions can only have a maximum confidence but with a sufficiently a large bounded region where the dynamics can evolve, this error can become arbitrarily low.
In practice, we initialized the dynamics with the constant zero function.

The choice of measure for both $\eta$ and $t$ has the potential to significantly impact the convergence integral; however, in practice we found that a uniform distribution over the n-cube performed well for most problems and in high-dimensional settings a slightly different distribution can be used to mitigate the curse of dimensionality.

\textbf{Picking $H$ to sample from:}

We will sample from a $k$-dimensional hypercube with corners at $\left( \pm s, \pm s, \ldots , \pm s \right)$. 
We now have to pick $s$ so that $\eta$ does not evolve outside of the set $H$. 
In principle we could instead select a $k$-hypersphere but it is well known that most of the volume of the hypersphere is concentrated towards the surface in higher dimensions. 
In reality, the dynamics will spend most of the time evolving in the interior since our prediction dynamics are initialized at the origin.
Although the hypercube also has most of its volume near the corners, we expect systems to evolve near the corners that correspond to a classification.
We note that this choice of $H$ is unlikely to be optimal.

 To select and $s$, begin by considering the bound we are trying to enforce on the dynamics:

\begin{align}
    \dot{V}_y(\eta, x, t) \leq -k V(\eta)
\end{align}
In a comparison lemma style analysis consider the system that achieves the upper bound:

\begin{align}
    \dot{\gamma} = -k \gamma
\end{align}

We hypothesise that in the case of an incorrect classification, the system evolves as if lower bounded by $\underline{\gamma} = k \underline{\gamma}$.
This assumption is based on the supposition that the dynamical system will evolve to a corner of the hypercube corresponding to an admissible but incorrect classification.
Therefore we only have to solve for an $s$ that satisfies $V_y(\eta^*) = e^{-k}$ where 

\begin{align}
    \eta^*_i = \begin{cases}
    s & \text{ if } 1\leq i \leq k \text{ is the correct class. } \\
    -s & \text{ otherwise }\\
    \end{cases}
\end{align}

Although this choice of $H$ worked in practice, this choice of $H$ is not optimal and could be improved by better exploiting the structure of the dynamics and the classification problem more generally.

\textbf{Sampling on Hypersphere}
In high-dimensional spaces sampling in the hypercube is highly inefficient for retrieving samples around the origin.
Our models are also all initialized at the origin and should end close to a vertex of the simplex that corresponds to a classification.
It's thus imperative to bias sampling towards the origin in higher dimensions. To do this we use the following procedure to generate samples: 

\begin{enumerate}
    \item Set $r_{max}$ so that a hyper sphere of that radius inscribes the hyper-cube for $H$ defined previously.
    \item Sample a radius $ r \sim U(0, r_{max})$
    \item Sample $h$ from the Hyper sphere of radius $r$ of dimension $n-1$  where $n$ is the dimensionality of the state-space using the method by \citet{hypersphere_sampling}.
\end{enumerate}
Although this method significantly biases the samples towards zero, we note that that the region near zero is the area that implies the smallest prior for a classification.
If we are very far away from the origin we would expect to keep moving in the direction opposite of the origin (since we have a strong prior).
The dynamics should learn to exploit this by having relatively simple dynamics in points far away from the origin that get more complex as they approach the point with lowest prior (zero).

\newpage
\section{Experimental Details}
\label{apx:exp_details}
To simplify tuning, we trained our models using Nero \citep{liu2021learning} with a learning rate of $0.01$ with a batch size of $64$ for models trained with LyaNet and $128$ for models trained with regular backpropagation. We found this by performing a grid search on learning rates and batch sizes over $(0.1,0.001,0.001)\times(32,64,128)$, validated on a held out set of $10\%$ of training data. All models were trained for a total of 120 epochs. For our adversarial attack we used PGD as implemented by \citet{kim2020torchattacks} for 10 iterations with a step size $\alpha=\frac{2}{255}$. Our experiments ran on a cluster 6 GPUs: 4 GeForce 1080 GPUs, 1 Titan X and Titan RTX. All experiments were able to run on less than 10GB of VRAM. 

\end{document}